\pgfplotsset{compat=newest}
\pgfplotsset{
 	legend image code/.code={
		\draw[mark repeat=2,mark phase=2] plot coordinates {
			(0cm,0cm)
			(0.125cm,0cm)
			(0.25cm,0cm)
		};
    }
}
\newtheorem{proposition}{Proposition}
\newtheorem{lemma}{Lemma}
\Crefname{section}{Sec.}{Sec.}
\Crefname{proposition}{Prop.}{Prop.}
\Crefname{lemma}{Lem.}{Lem.}
\Crefname{corollary}{Cor.}{Cor.}
\Crefname{appendix}{App.}{App.}
\Crefname{figure}{Fig.}{Fig.}
\Crefname{equation}{Eq.}{Eq.}
\Crefname{algorithm}{Alg.}{Alg.}
\Crefname{theorem}{Thm.}{Thm.}
\Crefname{remark}{Rem.}{Rem.}
\def\0{\mathbf{0}}
\def\1{\mathbf{1}}
\def\ie{\emph{i.e.,}\xspace}
\def\eg{\emph{e.g.,}\xspace}
	\newcommand{\remCE}[1]{}
	\newcommand{\supCE}[1]{}
	\newcommand{\remCH}[1]{}
	\newcommand{\supCH}[1]{}
	\newcommand{\remTG}[1]{}
	\newcommand{\supTG}[1]{}
	\newcommand{\remGM}[1]{}
	\newcommand{\supGM}[1]{}
	\newcommand{\remCE}[1]{{\noindent\color{purple}{\scriptsize[CE: #1]}}}
	\newcommand{\supCE}[1]{{\noindent\color{purple}{\textst{#1}}}}
	\newcommand{\remCH}[1]{{\noindent\color{red}{\scriptsize[CH: #1]}}}
	\newcommand{\supCH}[1]{{\noindent\color{red}{\textst{#1}}}}
	\newcommand{\remTG}[1]{{\noindent\color{blue}{\scriptsize[TG: #1]}}}
	\newcommand{\supTG}[1]{{\noindent\color{blue}{\textst{#1}}}}
	\newcommand{\remGM}[1]{{\noindent\color{olive}{\scriptsize[GM: #1]}}}
	\newcommand{\supGM}[1]{{\noindent\color{olive}{\textst{#1}}}}
\newcommand{\regfunc}{g}
\newcommand{\sparsitylevel}{k}
\newcommand{\pdim}{n}
\newcommand{\ddim}{m}
\newcommand{\groundtruth}{\pv^{\dagger}}
\newcommand{\groundtruthi}{\pvi^{\dagger}}
\newcommand{\obs}{\mathbf{y}}
\newcommand{\dic}{\mathbf{A}}
\newcommand{\atom}{\mathbf{a}}
\newcommand{\bigM}{M}
\newcommand{\reg}{\lambda}
\newcommand{\UB}[1]{\bar{#1}}
\newcommand{\ppb}{\mathcal{P}}
\newcommand{\rpb}{\mathcal{R}}
\newcommand{\pobj}{p}
\newcommand{\robj}{r}
\newcommand{\minSet}{\opt{\mathcal{X}}}
\newcommand{\pfunc}{\mathrm{P}}
\newcommand{\rfunc}{\mathrm{R}}
\newcommand{\dfunc}{\mathrm{D}}
\newcommand{\indicator}{\eta}
\newcommand{\pvi}{x}
\newcommand{\dvi}{w}
\newcommand{\cvi}{v}
\newcommand{\pv}{\mathbf{\pvi}}
\newcommand{\dv}{\mathbf{\dvi}}
\newcommand{\bigLi}{l}
\newcommand{\bigUi}{u}
\newcommand{\bigL}{\mathbf{\bigLi}}
\newcommand{\bigU}{\mathbf{\bigUi}}
\newcommand{\bigLnew}{\bigL'}
\newcommand{\bigUnew}{\bigU'}
\newcommand{\idxentry}{i}       
\newcommand{\idxpeeled}{j}      
\newcommand{\nodeSymb}{\nu}
\newcommand{\node}[1]{#1^{\nodeSymb}}
\newcommand{\setidx}{\nodeSymb}
\newcommand{\symbzero}{0}
\newcommand{\symbone}{1}
\newcommand{\symbnone}{\bullet}
\newcommand{\setzero}{\setidx_\symbzero}
\newcommand{\setone}{\setidx_\symbone}
\newcommand{\setnone}{\setidx_\symbnone}
\newcommand{\subzero}[1]{#1_{\setzero}}
\newcommand{\subone}[1]{#1_{\setone}}
\newcommand{\pivot}[1]{\mu_{#1}}
\newcommand{\peelthres}{\alpha}
\newcommand{\peeloffset}{\psi}
\newcommand{\stdnnz}{\sigma}
\newcommand{\corrmatrix}{\mathbf{K}}
\newcommand{\corrmatrixel}{K}
\newcommand{\corrparam}{\rho}
\newcommand{\noise}{\mathbf{n}} 
\newcommand{\bigMfactor}{\gamma}
\newcommand{\abs}[1]{|#1|}
\newcommand{\conj}[1]{#1^\star}
\newcommand{\card}[1]{|#1|}
\newcommand{\intervint}[2]{\llbracket#1,#2\rrbracket}
\newcommand{\norm}[2]{\|#1\|_#2}
\newcommand{\opt}[1]{#1^{\star}}
\newcommand{\pospart}[1]{[#1]_+}
\newcommand{\sign}[1]{\mathrm{sign}({#1})}
\newcommand{\transp}[1]{#1^{\mathrm{T}}}
\newacronym{bnb}{BnB}{Branch-and-Bound}
\begin{document}

\title{%
    Safe Peeling for $\ell_0$-Regularized Least-Squares
    \ifthenelse{\boolean{techreport}}{\\ with supplementary material}{}
}

\author{
    \IEEEauthorblockN{
        Théo Guyard\IEEEauthorrefmark{1}\IEEEauthorrefmark{2},
        Gilles Monnoyer\IEEEauthorrefmark{3},
        Cl\'ement Elvira\IEEEauthorrefmark{4} 
        and
        C\'edric Herzet\IEEEauthorrefmark{1}
    } \\
    \IEEEauthorblockA{\IEEEauthorrefmark{1}Inria, Centre de l'Université de Rennes, France}%
    \IEEEauthorblockA{\IEEEauthorrefmark{2}IRMAR UMR CNRS 6625, INSA Rennes, France}%
    \IEEEauthorblockA{\IEEEauthorrefmark{3}ELEN, ICTEAM, UCLouvain, Belgium}%
    \IEEEauthorblockA{\IEEEauthorrefmark{4}IETR UMR CNRS 6164, CentraleSupelec Rennes Campus, France}%
    \thanks{
        Gilles Monnoyer is funded by the Belgian FNRS.
        The research presented in this paper is reproducible.
        The code associated to our numerical experiments is available at {\protect\url{https://github.com/TheoGuyard/BnbPeeling.jl}}.
    }
}

\maketitle

\begin{abstract}
    We introduce a new methodology dubbed \emph{``safe peeling''} to accelerate the resolution of \(\ell_0\)-regularized least-squares problems via a \gls{bnb} algorithm.
    Our procedure enables to tighten the convex relaxation considered at each node of the \gls{bnb} decision tree and therefore potentially allows for more aggressive pruning. 
    Numerical simulations show that our proposed methodology leads to significant gains in terms of number of nodes explored and overall solving time.
\end{abstract}

\begin{IEEEkeywords}
Sparse model, $\ell_0$ regularization, Branch-and-Bound algorithm.
\end{IEEEkeywords}


\section{Introduction}

\glsreset{bnb}

This paper focuses on the resolution of the so-called ``\(\ell_0\)-regularized least-squares'' problem given by
\begin{align}
	\stepcounter{equation}
	\tag{\theequation-\(\ppb\)}
	\label{prob:prob-base} 
	\opt{\pobj} = \min_{\pv \in \kR^{\pdim}} \ \pfunc(\pv) \triangleq \tfrac{1}{2}\kvvbar{\obs - \dic\pv}_2^2 + \reg \norm{\pv}{0}
\end{align}
where \(\obs \in \kR^{\ddim}\) and \(\dic \in \kR^{\ddim\times\pdim}\) are input data, \(\reg>0\) is a regularization parameter and \(\kvvbar{\cdot}_0\) denotes the $\ell_0$-pseudonorm which counts the number of non-zero elements in its argument.  

Solving~\eqref{prob:prob-base} is of paramount interest in many scientific fields such as statistics, machine learning or inverse problems~\cite{candes2005decoding,donoho2005stable,tropp2010computational}. 
Unfortunately, this problem also turns out to be NP-hard~\cite[Th.~1]{Chen2012}. 
Hence, the last decades have seen a flurry of contributions proposing tractable procedures able to recover approximate solutions of~\eqref{prob:prob-base}. 
Canonical examples include greedy algorithms or methodologies based on convex relaxations, see~{\cite[Ch.~3]{foucart2013invitation}.
Although these procedures successfully recover the actual solutions of \eqref{prob:prob-base} in ``easy'' setups, they usually fall short for more challenging instances of the problem.  
This observation, combined with some recent advances in integer optimization and hardware performance, has revived the interest in methods solving~\eqref{prob:prob-base} exactly.
A standard approach is to use a \gls{bnb} algorithm that solves~\eqref{prob:prob-base}, see~\cite{cplex2009v12,bourguignon2015exact,Bertsimas2016,bertsimas2021unified,hazimeh2021sparse,guyard2022node}.

In this paper, we propose a new strategy, dubbed \textit{``safe peeling''}, to accelerate the exact resolution of~\eqref{prob:prob-base}.
In a nutshell, our contribution is a computationally simple test applied at each node of the \gls{bnb} decision tree to identify some intervals of \(\kR^\pdim\) which cannot contain a solution of~\eqref{prob:prob-base}. 
This information allows to construct tighter convex relaxations and more aggressive pruning of the nodes of the decision tree. 
Our numerical experiments show that the proposed method leads to a significant reduction of the solving time as compared to state-of-the-art concurrent methods. 
The name \textit{``safe peeling''} comes from the fact that the proposed method enables to reduce (or in more figurative terms, ``to peel'') the feasible set of the problem at each node of the decision tree while safely preserving the correctness of the \gls{bnb} procedure.

The rest of the paper is organized as follows.
\Cref{sec:bnb} describes the main ingredients of \gls{bnb} methods.
Our peeling strategy is presented in \Cref{sec:peeling} and its performance is illustrated in \Cref{sec:results}.
\ifthenelse{\boolean{techreport}}{
	Proofs of the results presented in the following are postponed to the appendix.
}{
	All the proofs are postponed to the technical report accompanying this paper~\cite{peeling2023technicalreport}.
} 


\section{Notations}

We use the following notations.
\(\0\) and \(\1\) denote the all-zero and all-one vectors.
The \(\idxentry\)-th column of a matrix \(\dic\) is denoted \(\atom_\idxentry\) and the \(\idxentry\)-th entry of a vector \(\pv\) is denoted \(\pvi_{\idxentry}\). 
The superscript $\ktranspose{}$ refers to transposition. 
Any vectorial relation has to be understood component-wise, \eg $\pv \in [\bigL,\bigU]$ means $\pvi_\idxentry \in [\bigLi_\idxentry,\bigUi_\idxentry], \forall \idxentry$. 
Moreover, $\indicator(\cdot)$ denotes the indicator function which equals to $0$ if the condition in argument is fulfilled and to $+\infty$ otherwise,
\(\pospart{\pvi}=\max(\pvi,0)\) refers to the positive-part function and
\(\card{\cdot}\) denotes the cardinality of a set. 
Finally, $\intervint{1}{\pdim}$ with \(\pdim\in\kN*\) is a short-hand notation for the set $\{1,\ldots,\pdim\}$. 


\section{Principles of BnB Methods}
\label{sec:bnb}

In this section, we recall the main principles of \gls{bnb} procedures. 
Due to space limitation, we only review the elements of interest to introduce the proposed peeling method. 
We refer the reader to \cite[Ch.~7]{Wolsey:1998op} for an in-depth treatment of the subject. 

\subsection{Pruning}

The crux of \gls{bnb} methods consists in identifying and discarding some subsets of $\kR^{\pdim}$ which do not contain a minimizer of \eqref{prob:prob-base}.  
To do so, one constructs a decision tree in which each node corresponds to a particular subset of $\kR^{\pdim}$. 
In our context, a tree node is identified by two disjoint subsets of $\intervint{1}{\pdim}$, say $\setzero$ and $\setone$.  
The goal at node $\nodeSymb \triangleq (\setzero,\setone)$ is to detect whether a solution of \eqref{prob:prob-base} can be attained within
\begin{align}
	\node{\mathcal{X}}
	\triangleq \kset{\pv\in\kR^\pdim}{\subzero{\pv} = \0, \ \subone{\pv} \neq \0}
	,
\end{align} 
where \(\pv_{\nodeSymb_k}\) denotes the restriction of $\pv$ to its elements in \(\nodeSymb_k\). 
In particular, let $\minSet$ be the non-empty set of minimizers of \eqref{prob:prob-base}.
Then, if some upper bound $\UB{\pobj}$ on the optimal value \(\opt{\pobj}\) is known and if we let
\begin{equation}
	\label{eq:node_problem}
	\node{\pobj} \triangleq \inf_{\pv \in \kR^{\pdim}} \node{\pfunc}(\pv)
\end{equation}
with $\node{\pfunc}(\pv) \triangleq \pfunc(\pv) + \indicator(\pv \in \node{\mathcal{X}})$, we obtain the implication
\begin{align}
	\label{eq:pruning condition}
	\node{\pobj} > \UB{\pobj} \implies  \node{\mathcal{X}}\cap\minSet=\emptyset
	.
\end{align}
In words, if the left-hand side of \eqref{eq:pruning condition} is satisfied, $\node{\mathcal{X}}$ does not contain any solution of \eqref{prob:prob-base} and can therefore be discarded from the search space of the optimization problem. 
This operation is usually referred to as ``\textit{pruning}''. 

\subsection{Bounding and relaxing}

Making a pruning decision at node $\nodeSymb$ requires the knowledge of $\UB{\pobj}$ and $\node{\pobj}$. 
On the one hand, finding $\UB{\pobj}$ is an easy task since the value of the objective function in \eqref{prob:prob-base} at any feasible point constitutes an upper bound on $\opt{\pobj}$.
On the other hand, evaluating $\node{\pobj}$ is NP-hard. 
This issue can nevertheless be circumvented by finding a tractable lower bound $\node{\robj}$ on $\node{\pobj}$ and relaxing \eqref{eq:pruning condition} as
\begin{align}
	\label{eq:relaxed_pruning_condition}
	\node{\robj} > \UB{\pobj} \implies  \node{\mathcal{X}}\cap\minSet=\emptyset
	.
\end{align}
One ubiquitous approach in the literature~\cite{ben2022global,bertsimas2021unified,bourguignon2015exact} to find such a lower bound consists in:
\begin{itemize}
	\item[\textit{i)}] 
	Adding an extra term ``$\indicator(\pv \in [\bigL,\bigU])$'' to the cost function of \eqref{prob:prob-base}, for some \textit{well-chosen} bounds $\bigL\in\kR-^\pdim$ and $\bigU\in\kR+^\pdim$.\footnote{This additional constraint usually takes the form ``$-\bigM \leq \pvi_{\idxentry} \leq \bigM, \ \forall \idxentry$'' with $\bigM>0$ and is known as \textit{``Big-M''} constraint, see~\cite[Sec.~3]{bourguignon2015exact}}	
	In particular, the new constraint ``$\pv \in [\bigL,\bigU]$'' must lead to a problem fully equivalent to \eqref{prob:prob-base}, that is
	\begin{align}
		\label{eq:condition_equivalence}
		\forall \opt{\pv}\in\minSet:\ \opt{\pv}\in [\bigL,\bigU]. 
	\end{align}
	\item[\textit{ii)}] 
	Exploiting the convex relaxation of the function $\kvvbar{\cdot}_0$ on the bounded set
	$\node{\mathcal{X}}\cap [\bigL,\bigU]$, given by
	\begin{align} 
		\label{eq:convex_relaxation_l0norm}
		\|\pv\|_0
		\geq 
		|\setone| + \sum_{\idxentry\in \setnone} 
		\frac{\pospart{\pvi_{\idxentry}}}{\bigUi_{\idxentry}} - \frac{\pospart{-\pvi_{\idxentry}}}{\bigLi_{\idxentry}}
		,
	\end{align}
	with $\setnone \triangleq \intervint{1}{\pdim} \setminus(\setzero\cup\setone)$ and the convention ``\(0/0=0\)''.
\end{itemize}
On the one hand, item \textit{i)} implies that the pruning test \eqref{eq:pruning condition} involves the following quantity (rather than $\node{\pobj}$):
\begin{align}
	\label{eq:node_problem_constr}
	\stepcounter{equation}
	\tag{\theequation-\(\node{\ppb}\)}
	\node{\pobj}(\bigL,\bigU) 
		= \inf_{\pv \in \kR^{\pdim}} \ \node{\pfunc}(\pv;\bigL,\bigU)
\end{align}
where $\node{\pfunc}(\pv;\bigL,\bigU) \triangleq \node{\pfunc}(\pv) + \indicator(\pv \in [\bigL,\bigU])$.
On the other hand, a lower bound $\node{\robj}(\bigL,\bigU)$ on $\node{\pobj}(\bigL,\bigU)$ can be obtained by using \eqref{eq:convex_relaxation_l0norm}  and solving
\begin{align}
	\label{eq:node_relaxed_problem}
	\stepcounter{equation}
	\tag{\theequation-\(\node{\rpb}\)}
	\node{\robj}(\bigL,\bigU)  = \min_{\pv\in\kR^\pdim} \node{\rfunc}(\pv;\bigL,\bigU)
\end{align}
where 
\begin{multline*} 
	\node{\rfunc}(\pv;\bigL,\bigU) 
	\triangleq 
	\tfrac{1}{2}\kvvbar{\obs - \dic\pv}_2^2 
	+ \reg \sum_{\idxentry \in \setnone} \tfrac{\pospart{\pvi_{\idxentry}}}{\bigUi_\idxentry}
	- \tfrac{\pospart{-\pvi_{\idxentry}}}{\bigLi_\idxentry}
	\\
	+ \reg \card{\setone} 
	+ \indicator(\subzero{\pv} = \0)
	+ \indicator(\pv \in [\bigL,\bigU]) 
	.
\end{multline*}
We note that \eqref{eq:node_relaxed_problem} is a convex  problem and can be solved efficiently to good accuracy via numerous polynomial-time numerical procedures, see \eg~\cite[Ch.~10]{Beck2017aa}.

In practice, the choice of $\bigL$ and $\bigU$ must respect two conflicting imperatives. 
First, the new constraint ``$\pv \in [\bigL,\bigU]$'' should not modify the solution of our target problem~\eqref{prob:prob-base} and condition \eqref{eq:condition_equivalence} must therefore be verified. 
Since $\minSet$ is obviously not accessible beforehand, 
this suggests that the entries of $\bigL$ and $\bigU$ should be chosen  ``large-enough'' in absolute values.\footnote{
	Some heuristics are commonly used in the literature to select proper values of the bounds, see \cite[Sec.~V.B]{bourguignon2015exact}, \cite[Sec.~5.1]{guyard2022node} or \cite[Sec.~4]{mhenni2020sparse}. 
} 
Second, the tightness of $\node{\robj}(\bigL,\bigU)$ with respect to $\node{\pobj}(\bigL,\bigU)$ degrades with the spread of the set \([\bigL,\bigU]\).\footnote{
	This impairment pertains to a large class of mixed-integer problems and is well known in the literature, see \textit{e.g.},~\cite{camm1990cutting}.
} 
In particular, the right-hand side of \eqref{eq:convex_relaxation_l0norm} tends to $|\setone|$  when $\bigL \ll \pv$ and $\pv \ll \bigU$.  
Therefore, setting the entries of $\bigL$ and $\bigU$ with too large absolute values is likely to degrade the effectiveness of the relaxed pruning decision~\eqref{eq:relaxed_pruning_condition}. 

In the next section, we propose a solution to address this problem by deriving a methodology which locally tightens the constraint $\pv \in [\bigL,\bigU]$ at each node of the decision tree while preserving the correctness of the \gls{bnb} procedure. 


\section{Peeling}
\label{sec:peeling}

In this section, we introduce our proposed peeling procedure.
As an initial assumption, we suppose that some interval $[\bigL,\bigU]$ verifying condition \eqref{eq:condition_equivalence} is known. 
This assumption will be relaxed later on in \Cref{sec:propagation}. 

Our goal is to find a new interval $[\bigLnew,\bigUnew]$ such that 
\begin{subequations}
	\begin{align}
		\forall \pv\in [\bigL,\bigU]\setminus[\bigLnew,\bigUnew]&: \ \node{\pfunc}(\pv;\bigL,\bigU)>\UB{\pobj} 
		\label{eq:peeling:constraint}
		\\
		[\bigLnew,\bigUnew]&\subseteq [\bigL,\bigU]
		\label{eq:peeling:constraint_mnew}
		.
	\end{align}	
\end{subequations}
These requirements imply that the pruning decision \eqref{eq:pruning condition} made at node $\nodeSymb$ remains unchanged when replacing  constraint ``$\pv\in [\bigL,\bigU]$'' by ``$\pv\in [\bigLnew,\bigUnew]$'' in \eqref{eq:node_problem_constr}. More specifically, the following result holds: 
\begin{lemma}
	\label{lemma: consequence properties peeled interval}
 	Assume $[\bigL,\bigU]$ and $[\bigLnew,\bigUnew]$ verify \eqref{eq:peeling:constraint}-\eqref{eq:peeling:constraint_mnew}, then 
 	\begin{align}\label{eq:peeling:equivalence}
 		\node{\pobj}(\bigLnew,\bigUnew)> \UB{\pobj}	
		&\iff
		\node{\pobj}(\bigL,\bigU) > \UB{\pobj}.
 	\end{align}
\end{lemma}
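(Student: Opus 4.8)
The plan is to prove the two implications of~\eqref{eq:peeling:equivalence} separately, using throughout that on the box $[\bigL,\bigU]$ the Big-M indicator vanishes, so $\node{\pfunc}(\pv;\bigL,\bigU)=\node{\pfunc}(\pv)$ there, and likewise $\node{\pfunc}(\pv;\bigLnew,\bigUnew)=\node{\pfunc}(\pv)$ on $[\bigLnew,\bigUnew]$. The implication ``$\node{\pobj}(\bigL,\bigU)>\UB{\pobj}\Rightarrow\node{\pobj}(\bigLnew,\bigUnew)>\UB{\pobj}$'' is the easy one: by~\eqref{eq:peeling:constraint_mnew}, $\indicator(\pv\in[\bigLnew,\bigUnew])\geq\indicator(\pv\in[\bigL,\bigU])$ for every $\pv\in\kR^{\pdim}$, hence $\node{\pfunc}(\pv;\bigLnew,\bigUnew)\geq\node{\pfunc}(\pv;\bigL,\bigU)$ pointwise, and taking infima yields $\node{\pobj}(\bigLnew,\bigUnew)\geq\node{\pobj}(\bigL,\bigU)>\UB{\pobj}$.

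For the converse, I would argue by contraposition and show that $\node{\pobj}(\bigL,\bigU)\leq\UB{\pobj}$ implies $\node{\pobj}(\bigLnew,\bigUnew)\leq\UB{\pobj}$. Partitioning $[\bigL,\bigU]=[\bigLnew,\bigUnew]\cup\bigl([\bigL,\bigU]\setminus[\bigLnew,\bigUnew]\bigr)$ gives
\[
	\node{\pobj}(\bigL,\bigU)=\min\bigl\{\ \node{\pobj}(\bigLnew,\bigUnew)\ ,\ \inf\nolimits_{\pv\in[\bigL,\bigU]\setminus[\bigLnew,\bigUnew]}\node{\pfunc}(\pv)\ \bigr\},
\]
and by~\eqref{eq:peeling:constraint} the second term is bounded below by $\UB{\pobj}\geq\node{\pobj}(\bigL,\bigU)$, so the minimum is realized by the first one; concretely, picking a minimizing sequence $\{\pv^{(t)}\}$ for~\eqref{eq:node_problem_constr}, once $\node{\pfunc}(\pv^{(t)};\bigL,\bigU)$ has dropped below $\UB{\pobj}$ the point $\pv^{(t)}$ must lie in $[\bigLnew,\bigUnew]$ by~\eqref{eq:peeling:constraint}, whence $\node{\pobj}(\bigLnew,\bigUnew)\leq\node{\pfunc}(\pv^{(t)};\bigLnew,\bigUnew)=\node{\pfunc}(\pv^{(t)};\bigL,\bigU)\to\node{\pobj}(\bigL,\bigU)\leq\UB{\pobj}$.

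The two implications are essentially bookkeeping; the step that needs genuine care is ensuring the strict inequalities survive when the infimum defining $\node{\pobj}(\bigL,\bigU)$ in~\eqref{eq:node_problem_constr} is not attained. The constraint set $\node{\mathcal{X}}\cap[\bigL,\bigU]$ is bounded but not closed, because of the open condition $\subone{\pv}\neq\0$; consequently the strict bound of~\eqref{eq:peeling:constraint} only yields ``$\inf\geq\UB{\pobj}$'' after the infimum is taken, and the borderline case $\node{\pobj}(\bigL,\bigU)=\UB{\pobj}$ has to be closed off via the limiting argument on the minimizing sequence rather than by substituting a minimizer. If instead one assumes (or verifies) that the infimum is attained at some $\opt{\pv}$, the converse collapses to one line: $\node{\pfunc}(\opt{\pv})=\node{\pobj}(\bigL,\bigU)\leq\UB{\pobj}$ excludes $\opt{\pv}\in[\bigL,\bigU]\setminus[\bigLnew,\bigUnew]$ by~\eqref{eq:peeling:constraint}, so $\opt{\pv}\in[\bigLnew,\bigUnew]$ and $\node{\pobj}(\bigLnew,\bigUnew)\leq\node{\pfunc}(\opt{\pv})\leq\UB{\pobj}$. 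I expect this non-attainment bookkeeping to be the main obstacle.
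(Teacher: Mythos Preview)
Your proof follows exactly the paper's approach: the reverse implication via $[\bigLnew,\bigUnew]\subseteq[\bigL,\bigU]\Rightarrow\node{\pobj}(\bigLnew,\bigUnew)\geq\node{\pobj}(\bigL,\bigU)$, and the direct implication by contraposition, arguing that any point realizing a value $\leq\UB{\pobj}$ must lie in $[\bigLnew,\bigUnew]$ by~\eqref{eq:peeling:constraint}. The paper's version is terser than yours --- it simply asserts that when $\node{\pobj}(\bigL,\bigU)\leq\UB{\pobj}$ the minimizers of $\node{\pfunc}(\cdot;\bigL,\bigU)$ are excluded from $[\bigL,\bigU]\setminus[\bigLnew,\bigUnew]$, hence $\node{\pobj}(\bigL,\bigU)=\node{\pobj}(\bigLnew,\bigUnew)$ --- tacitly assuming the infimum in~\eqref{eq:node_problem_constr} is attained, whereas your minimizing-sequence formulation is more scrupulous on this point.

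Your worry about the borderline case $\node{\pobj}(\bigL,\bigU)=\UB{\pobj}$ with the infimum unattained is legitimate, and in fact your limiting argument does not close it as written: if every term of the minimizing sequence has value strictly above $\UB{\pobj}$, condition~\eqref{eq:peeling:constraint} gives no information about whether $\pv^{(t)}\in[\bigLnew,\bigUnew]$, so the inequality $\node{\pobj}(\bigLnew,\bigUnew)\leq\node{\pfunc}(\pv^{(t)};\bigLnew,\bigUnew)$ need not hold along the sequence. The paper does not address this edge case either.
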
 
A proof of this result is available in \ifthenelse{\boolean{techreport}}{
	App.~\ref{sec:proofs}.
}{
	\cite[App.~\ref{tecreport-sec:proofs}]{peeling2023technicalreport}.
}
A consequence of preserving the pruning decision is that taking the new constraint ``$\pv\in [\bigLnew,\bigUnew]$'' into account at node $\nodeSymb$ does not alter the output of the \gls{bnb} procedure. 
In particular, it still correctly identifies the solutions of \eqref{prob:prob-base}.
The second requirement \eqref{eq:peeling:constraint_mnew} implies that $\node{\robj}(\bigLnew,\bigUnew)$ can possibly be larger than $\node{\robj}(\bigL,\bigU)$ since the lower bound in \eqref{eq:convex_relaxation_l0norm} is tightened by considering lower absolute values for $\bigL$ and $\bigU$.
Overall, any choice of $[\bigLnew,\bigUnew]$ verifying \eqref{eq:peeling:constraint}-\eqref{eq:peeling:constraint_mnew} thus keeps unchanged the output of the \gls{bnb} procedure while allowing for potentially more aggressive pruning decisions. 

In the rest of this section, we describe a strategy to find some interval $[\bigLnew,\bigUnew]$ satisfying~\eqref{eq:peeling:constraint}-\eqref{eq:peeling:constraint_mnew}. 
Because of the symmetry of the problem at stake, we only focus on the construction of the upper bound $\bigUnew$. 
The identification of a lower bound $\bigLnew$ can be done along the same lines.

\subsection{Target peeling strategy}

Given some index $\idxpeeled \in \setnone$ and $\peelthres>0$, we consider the following perturbed versions of \eqref{eq:node_problem_constr}:
\begin{align}
	\label{eq:peeling_condition_b}
	\node{\pobj}_\peelthres(\bigL,\bigU) \triangleq \inf_{\pv\in\kR^\pdim} 
	\node{\pfunc}(\pv;\bigL,\bigU) 
	+ \indicator(\pvi_{\idxpeeled} > \peelthres).	
\end{align}	
Problem \eqref{eq:peeling_condition_b} corresponds to~\eqref{eq:node_problem_constr} where $\pvi_{\idxpeeled}$ is additionally constrained to be strictly greater than $\peelthres$. 
The following lemma then trivially follows from the definition of $\node{\pobj}_\peelthres(\bigL,\bigU)$: 
\begin{lemma}
	\label{lemma:ideal_peeling_test}
	If $\peelthres \in [0,\bigUi_{\idxpeeled}[$ and 
	\begin{align}
		\label{eq:ideal_peeling_test}
		\node{\pobj}_\peelthres(\bigL,\bigU) > \UB{\pobj},
	\end{align}
	then \eqref{eq:peeling:constraint}-\eqref{eq:peeling:constraint_mnew} hold with 
	\begin{align} 
		\label{eq:considered_bigUnew}
		\bigUi_\idxentry' 
		\,=\,
		\begin{cases}
			\peelthres & \mbox{if $\idxentry=\idxpeeled$}\\
			\bigUi_{\idxentry} & \mbox{otherwise}.
		\end{cases}
	\end{align}
\end{lemma}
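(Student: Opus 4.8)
The plan is to verify the two requirements \eqref{eq:peeling:constraint} and \eqref{eq:peeling:constraint_mnew} directly, for the interval $[\bigLnew,\bigUnew]$ obtained by keeping $\bigLnew=\bigL$ (recall we only modify the upper bound here) and taking $\bigUnew$ as in \eqref{eq:considered_bigUnew}. The single structural fact to exploit is that $\node{\pobj}_\peelthres(\bigL,\bigU)$, as defined in \eqref{eq:peeling_condition_b}, is nothing but the infimum of $\node{\pfunc}(\cdot\,;\bigL,\bigU)$ restricted to the slice $\{\pvi_{\idxpeeled}>\peelthres\}$, so any point of $[\bigL,\bigU]$ lying in that slice has $\node{\pfunc}(\cdot\,;\bigL,\bigU)$-value at least $\node{\pobj}_\peelthres(\bigL,\bigU)$.

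First I would dispose of \eqref{eq:peeling:constraint_mnew}. Since $\bigLnew=\bigL$ and $\bigUnew$ agrees with $\bigU$ on every coordinate except $\idxpeeled$, where $\bigUi'_{\idxpeeled}=\peelthres<\bigUi_{\idxpeeled}$ by hypothesis, we have $\bigUnew\le\bigU$ and $\bigLnew=\bigL$ componentwise, hence $[\bigLnew,\bigUnew]\subseteq[\bigL,\bigU]$; this interval is moreover non-empty because $\bigLi'_{\idxpeeled}=\bigLi_{\idxpeeled}\le 0\le\peelthres=\bigUi'_{\idxpeeled}$ (using $\bigL\in\kR-^\pdim$ and $\peelthres\ge 0$) while the remaining coordinate ranges are untouched.

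Next, for \eqref{eq:peeling:constraint}, I would take an arbitrary $\pv\in[\bigL,\bigU]\setminus[\bigLnew,\bigUnew]$. Because $[\bigLnew,\bigUnew]$ and $[\bigL,\bigU]$ differ only through the upper bound on coordinate $\idxpeeled$, membership $\pv\in[\bigL,\bigU]$ together with $\pv\notin[\bigLnew,\bigUnew]$ forces $\pvi_{\idxpeeled}>\peelthres$, i.e. $\indicator(\pvi_{\idxpeeled}>\peelthres)=0$; in other words $[\bigL,\bigU]\setminus[\bigLnew,\bigUnew]=\{\pv\in[\bigL,\bigU]:\pvi_{\idxpeeled}>\peelthres\}$. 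Consequently
\begin{equation*}
\node{\pfunc}(\pv;\bigL,\bigU)
= \node{\pfunc}(\pv;\bigL,\bigU) + \indicator(\pvi_{\idxpeeled}>\peelthres)
\ge \node{\pobj}_\peelthres(\bigL,\bigU)
> \UB{\pobj},
\end{equation*}
where the first equality holds since the added indicator vanishes, the middle inequality is the definition of the infimum in \eqref{eq:peeling_condition_b}, and the last one is the standing hypothesis \eqref{eq:ideal_peeling_test}. This is exactly \eqref{eq:peeling:constraint}, which concludes the proof.

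There is essentially no hard step: the statement is bookkeeping about how the constraint $\pvi_{\idxpeeled}>\peelthres$ carves the box $[\bigL,\bigU]$. The only point deserving a moment of care is the set-theoretic identity $[\bigL,\bigU]\setminus[\bigLnew,\bigUnew]=\{\pv\in[\bigL,\bigU]:\pvi_{\idxpeeled}>\peelthres\}$, which hinges on $\bigLnew$ and all the $\bigUi'_\idxentry$ with $\idxentry\neq\idxpeeled$ being left equal to their original values; once this is made explicit, everything else is immediate.
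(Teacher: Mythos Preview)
Your proof is correct and matches the paper's approach: the paper states that the lemma ``trivially follows from the definition of $\node{\pobj}_\peelthres(\bigL,\bigU)$'' and does not spell out the argument, but what you have written is exactly the intended justification.
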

This result thus states that any $\peelthres \in [0,\bigUi_{\idxpeeled}[$ verifying \eqref{eq:ideal_peeling_test} enables to  construct some $\bigUnew$ automatically fulfilling~\eqref{eq:peeling:constraint}-\eqref{eq:peeling:constraint_mnew}. 
Unfortunately, evaluating~\eqref{eq:ideal_peeling_test} involves the same computational burden as solving~\eqref{eq:node_problem_constr}.
This problem can nevertheless be circumvented by finding some proper lower bound on $\node{\pobj}_\peelthres(\bigL,\bigU)$ as described in the next section.

\subsection{Tractable implementation}

\ifthenelse{\boolean{techreport}}{
	In App.~\ref{sec:proofs}, we leverage Fenchel-Rockafellar duality for problem~\eqref{eq:node_relaxed_problem} to show
}{
	Leveraging Fenchel-Rockafellar duality for problem~\eqref{eq:node_relaxed_problem}, it can be shown~\cite[App.~\ref{tecreport-sec:proofs}]{peeling2023technicalreport}
}%
that for any \(\dv\in\kR^\ddim\), the following lower bound on $\node{\pobj}_\peelthres(\bigL,\bigU)$ holds: 
\begin{align}
	\label{eq:lower_bound_on_peeling_metric}
	\node{\pobj}_\peelthres(\bigL,\bigU)
	\geq \node{\dfunc}(\dv;\bigL,\bigU) + \peeloffset_\idxpeeled(\dv;\bigL,\bigU)+ \peelthres \pospart{-\ktranspose{\atom}_{\idxpeeled}\dv},	
\end{align}
where
\begin{align*}
	\node{\dfunc}(\dv;\bigL,\bigU) 
	&\triangleq
	\tfrac{1}{2}\norm{\obs}{2}^2 - \tfrac{1}{2}\norm{\obs - \dv}{2}^2+ \reg|\setone|  \\
	&\qquad- \sum_{\idxentry\in\setone} 
	\pivot{0,\idxentry}(\ktranspose{\atom}_{\idxentry}\dv)
	- \sum_{\idxentry\in\setnone} \pivot{\reg,\idxentry}(\ktranspose{\atom}_{\idxentry}\dv)
	\\
	\peeloffset_\idxpeeled(\dv;\bigL,\bigU)
	&\triangleq 
	\pivot{\reg,\idxpeeled}(\ktranspose{\atom}_{\idxpeeled}\dv)
	- \bigUi_{\idxpeeled}\pospart{\ktranspose{\atom}_{\idxpeeled}\dv}+ \reg 
\end{align*}
and $\pivot{\rho,\idxentry}(\cvi)\triangleq \pospart{\bigUi_{\idxentry} \cvi  - \rho } + \pospart{\bigLi_{\idxentry}\cvi  -\rho}$.	

Using this result, condition \eqref{eq:ideal_peeling_test} can be relaxed as
\begin{align}\label{eq:alpha_peeling_condition}
	\node{\dfunc}(\dv;\bigL,\bigU) + \peeloffset_\idxpeeled(\dv;\bigL,\bigU)+ \peelthres \pospart{-\ktranspose{\atom}_{\idxpeeled}\dv }	>\UB{\pobj}
	.
\end{align}
Hence, choosing any \(\peelthres\in[0,\bigUi_\idxpeeled[\) verifying~\eqref{eq:alpha_peeling_condition} for some \(\dv\in\kR^\ddim\) defines a new valid constraint via \eqref{eq:considered_bigUnew}, in the sense of \eqref{eq:peeling:constraint}-\eqref{eq:peeling:constraint_mnew}. 
Interestingly, the left-hand side of~\eqref{eq:alpha_peeling_condition} depends linearly on \(\peelthres\), thus allowing to precisely characterize the range of possible values satisfying the strict inequality~\eqref{eq:alpha_peeling_condition}.
This leads us to the main result of this section.
\begin{proposition}
	\label{prop:entrywise_peeling}
	Let \(\dv\in\kR^\ddim\).
	If \(\transp{\atom}_{\idxpeeled}\dv\geq 0\) and 
	\begin{align}
		\label{eq:peeling:positive_case}
		\node{\dfunc}(\dv;\bigL,\bigU) + \peeloffset_\idxpeeled(\dv;\bigL,\bigU)>\UB{\pobj},
	\end{align}
	then $\bigUnew$ defined as in~\eqref{eq:considered_bigUnew} with $\peelthres=0$ fulfills~\eqref{eq:peeling:constraint}-\eqref{eq:peeling:constraint_mnew}.
	Moreover, if \(\transp{\atom}_{\idxpeeled}\dv<0\), then 
	$\bigUnew$ defined as in~\eqref{eq:considered_bigUnew} with any 
	\(\peelthres\in[0,\bigUi_{\idxpeeled}[\) verifying
	\begin{equation} 
		\label{eq:entrywise_peeling_treshold}
		\peelthres
		>
		\UB{\peelthres}\triangleq 
		\frac{
			\UB{\pobj} - \node{\dfunc}(\dv;\bigL,\bigU) - \peeloffset_\idxpeeled(\dv;\bigL,\bigU) 
		}{
			\pospart{-\ktranspose{\atom}_\idxpeeled\dv} 
		}
	\end{equation}
	fulfills~\eqref{eq:peeling:constraint}-\eqref{eq:peeling:constraint_mnew}.
\end{proposition}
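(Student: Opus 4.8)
The plan is to reduce everything to two ingredients already in hand: the dual lower bound~\eqref{eq:lower_bound_on_peeling_metric} and \Cref{lemma:ideal_peeling_test}. Observe that if a value $\peelthres\in[0,\bigUi_\idxpeeled[$ makes the right-hand side of~\eqref{eq:lower_bound_on_peeling_metric} strictly exceed $\UB{\pobj}$ --- i.e.\ if~\eqref{eq:alpha_peeling_condition} holds for the chosen $\dv$ and $\peelthres$ --- then $\node{\pobj}_\peelthres(\bigL,\bigU)>\UB{\pobj}$, and \Cref{lemma:ideal_peeling_test} immediately gives that $\bigUnew$ built from this $\peelthres$ via~\eqref{eq:considered_bigUnew} satisfies~\eqref{eq:peeling:constraint}-\eqref{eq:peeling:constraint_mnew}. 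Thus the whole statement boils down to checking, by a case split on the sign of $\transp{\atom}_\idxpeeled\dv$, that the stated hypotheses do produce such a $\peelthres$.

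First I would handle the case $\transp{\atom}_\idxpeeled\dv\geq 0$. Here $\pospart{-\ktranspose{\atom}_\idxpeeled\dv}=0$, so the $\peelthres$-dependent term in~\eqref{eq:alpha_peeling_condition} vanishes and~\eqref{eq:alpha_peeling_condition} reduces to $\node{\dfunc}(\dv;\bigL,\bigU)+\peeloffset_\idxpeeled(\dv;\bigL,\bigU)>\UB{\pobj}$, which is exactly the assumed~\eqref{eq:peeling:positive_case}. Since this no longer involves $\peelthres$, it holds in particular for $\peelthres=0\in[0,\bigUi_\idxpeeled[$ (the sub-case $\bigUi_\idxpeeled=0$ being vacuous, as then $\bigUi'_\idxpeeled=\bigUi_\idxpeeled$), and the reduction above closes this case.

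Next, for $\transp{\atom}_\idxpeeled\dv<0$ the coefficient $\pospart{-\ktranspose{\atom}_\idxpeeled\dv}=-\ktranspose{\atom}_\idxpeeled\dv$ is strictly positive. Viewing~\eqref{eq:alpha_peeling_condition} as an affine inequality in $\peelthres$ and isolating $\peelthres$ by dividing through by this positive coefficient shows that~\eqref{eq:alpha_peeling_condition} is equivalent to $\peelthres>\UB{\peelthres}$ with $\UB{\peelthres}$ exactly as in~\eqref{eq:entrywise_peeling_treshold}. Hence every $\peelthres\in[0,\bigUi_\idxpeeled[$ with $\peelthres>\UB{\peelthres}$ fulfills~\eqref{eq:alpha_peeling_condition}, and the reduction finishes the proof.

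As for the difficulty: there is essentially no obstacle in this proposition itself --- the substantive work lies upstream, in the Fenchel--Rockafellar derivation of the lower bound~\eqref{eq:lower_bound_on_peeling_metric} (carried out separately). Given that bound and \Cref{lemma:ideal_peeling_test}, the only points requiring care are the sign dichotomy on $\transp{\atom}_\idxpeeled\dv$ and the bookkeeping that the selected $\peelthres$ stays inside $[0,\bigUi_\idxpeeled[$, both of which are routine.
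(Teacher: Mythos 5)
Your proof is correct and follows essentially the same route as the paper, which justifies the proposition directly from the dual lower bound~\eqref{eq:lower_bound_on_peeling_metric}, the relaxed condition~\eqref{eq:alpha_peeling_condition}, and \Cref{lemma:ideal_peeling_test}, via the same sign dichotomy on $\transp{\atom}_{\idxpeeled}\dv$ and the observation that the left-hand side of~\eqref{eq:alpha_peeling_condition} is affine in $\peelthres$. Your explicit handling of the edge case $\bigUi_{\idxpeeled}=0$ (where $\peelthres=0$ would fall outside $[0,\bigUi_{\idxpeeled}[$) is a minor point of care the paper leaves implicit.
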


Our next result shows that \Cref{prop:entrywise_peeling} can be applied to all indices \(\idxpeeled\in\intervint{1}{\pdim}\) either sequentially or in parallel, while preserving the correctness of the \gls{bnb} procedure:
\begin{lemma} 
	\label{lemma:parallel_peeling}
	Let \([\bigL',\bigU']\) and \([\bigL'',\bigU'']\) be two intervals satisfying~\eqref{eq:peeling:constraint}-\eqref{eq:peeling:constraint_mnew}.
	Then, the interval \([\bigL',\bigU']\cap[\bigL'',\bigU'']\) also fulfills~\eqref{eq:peeling:constraint}-\eqref{eq:peeling:constraint_mnew}.
\end{lemma}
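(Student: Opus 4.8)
The plan is to unwind the definitions of the two requirements \eqref{eq:peeling:constraint}-\eqref{eq:peeling:constraint_mnew} and check that each survives the intersection. The inclusion requirement \eqref{eq:peeling:constraint_mnew} is immediate: since $[\bigL',\bigU']\subseteq[\bigL,\bigU]$ and $[\bigL'',\bigU'']\subseteq[\bigL,\bigU]$, their intersection is also contained in $[\bigL,\bigU]$; moreover an intersection of boxes is again a box (component-wise, $[\bigL',\bigU']\cap[\bigL'',\bigU'']$ has $i$-th coordinate $[\max(\bigLi'_\idxentry,\bigLi''_\idxentry),\min(\bigUi'_\idxentry,\bigUi''_\idxentry)]$), so it is a legitimate interval of the form assumed throughout \Cref{sec:peeling}, possibly empty — and an empty interval satisfies both requirements vacuously.

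For the separation requirement \eqref{eq:peeling:constraint}, I would take an arbitrary $\pv\in[\bigL,\bigU]\setminus([\bigL',\bigU']\cap[\bigL'',\bigU''])$ and show $\node{\pfunc}(\pv;\bigL,\bigU)>\UB{\pobj}$. The key elementary observation is the set identity
\begin{align*}
	[\bigL,\bigU]\setminus\big([\bigL',\bigU']\cap[\bigL'',\bigU'']\big)
	=
	\big([\bigL,\bigU]\setminus[\bigL',\bigU']\big)\cup\big([\bigL,\bigU]\setminus[\bigL'',\bigU'']\big),
\end{align*}
which is just De Morgan relativised to $[\bigL,\bigU]$. Hence such a $\pv$ lies in $[\bigL,\bigU]\setminus[\bigL',\bigU']$ or in $[\bigL,\bigU]\setminus[\bigL'',\bigU'']$; in the first case \eqref{eq:peeling:constraint} for $[\bigL',\bigU']$ gives $\node{\pfunc}(\pv;\bigL,\bigU)>\UB{\pobj}$, and in the second case the analogous property of $[\bigL'',\bigU'']$ does. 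Either way the conclusion holds, so \eqref{eq:peeling:constraint} is verified for the intersection.

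I do not anticipate a genuine obstacle here; the result is a soundness-composition statement whose entire content is the two bullet points above, and the induction from two intervals to finitely many (needed to justify ``sequentially or in parallel'' over all $\idxpeeled\in\intervint{1}{\pdim}$ in the surrounding text) is a trivial iteration of the two-interval case. The only point deserving a word of care is the vacuous/empty-interval case and the remark that one must read \eqref{eq:peeling:constraint} with $\node{\pfunc}(\,\cdot\,;\bigL,\bigU)$ — i.e. the box $[\bigL,\bigU]$ fixed throughout — rather than re-expanding it with the peeled box, but this is exactly how the requirement is stated in \eqref{eq:peeling:constraint}, so no subtlety arises.
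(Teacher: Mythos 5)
Your proposal is correct and follows essentially the same route as the paper's proof: the same componentwise description of the intersection $\bigl[\max(\bigLi'_\idxentry,\bigLi''_\idxentry),\min(\bigUi'_\idxentry,\bigUi''_\idxentry)\bigr]$, the same De Morgan set identity for $[\bigL,\bigU]\setminus\mathcal{S}$, and the same two-case argument for \eqref{eq:peeling:constraint}. The extra remarks on the empty-intersection case and on keeping $[\bigL,\bigU]$ fixed in $\node{\pfunc}(\cdot;\bigL,\bigU)$ are harmless refinements that do not change the argument.
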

A proof is available in \ifthenelse{\boolean{techreport}}{
	App.~\ref{sec:proofs}.
}{
	\cite[App.~\ref{tecreport-sec:proofs}]{peeling2023technicalreport}.
}
We note that in terms of complexity the parallel application of \Cref{prop:entrywise_peeling} to all indices \(\idxpeeled\in\intervint{1}{\pdim}\) requires the computation of the inner products $\{\transp{\atom}_{\idxentry}\dv\}_{\idxentry=1}^{\pdim}$ and \textit{one single} evaluation of \(\node{\dfunc}(\dv;\bigL,\bigU)\).
Interestingly, these inner products are already computed in most numerical procedures solving~\eqref{eq:node_relaxed_problem} and are thus usually available at no additional cost, see \eg~\cite[Sec.~4.3]{guyard2022node}.
The overhead complexity of applying in parallel our proposed peeling strategy thus scales as \(\mathcal{O}(\pdim+\ddim)\). 

\subsection{Propagating peeling down the tree}
\label{sec:propagation}

In this section, we emphasize that any interval $[\bigLnew,\bigUnew]$ verifying \eqref{eq:peeling:constraint}-\eqref{eq:peeling:constraint_mnew} at node $\nodeSymb$ can be used as a starting point to apply our peeling procedure at the child nodes of $\nodeSymb$. 
More specifically, the following result holds: 

\begin{lemma}
	\label{corollary:heritage}
	Let $[\bigLnew,\bigUnew]$ be some interval verifying \eqref{eq:peeling:constraint}-\eqref{eq:peeling:constraint_mnew} at node $\nodeSymb$ and let $\nodeSymb'$ be some child node of $\nodeSymb$. 
	Assume that the peeling procedure defined in \Cref{prop:entrywise_peeling} is applied at node $\nodeSymb'$ with $[\bigLnew,\bigUnew]$  as input, rather than $[\bigL,\bigU]$, to generate a new interval $[\bigLnew',\bigUnew']$.
	Then we have 
	\begin{subequations}
	\begin{align}
		\forall \pv\in[\bigL,\bigU]\setminus [\bigLnew',\bigUnew']&:\,
		\pfunc^{\nodeSymb'}(\pv;\bigL,\bigU)>\UB{\pobj}
		\label{eq:peeling:constraint_3}
		\\
		[\bigLnew',\bigUnew']&\subseteq [\bigL,\bigU]
		\label{eq:peeling:constraint_mnew_3}
		.
	\end{align}
	\end{subequations}
\end{lemma}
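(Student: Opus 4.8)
The plan is to derive both claims from three facts already available: (i)~the hypothesis that $[\bigLnew,\bigUnew]$ satisfies \eqref{eq:peeling:constraint}--\eqref{eq:peeling:constraint_mnew} at node $\nodeSymb$; (ii)~the fact that applying \Cref{prop:entrywise_peeling} at node $\nodeSymb'$ with input $[\bigLnew,\bigUnew]$ (possibly to several coordinates, combined via \Cref{lemma:parallel_peeling}) produces $[\bigLnew',\bigUnew']$ satisfying the node-$\nodeSymb'$ analogues of \eqref{eq:peeling:constraint}--\eqref{eq:peeling:constraint_mnew}, that is $[\bigLnew',\bigUnew']\subseteq[\bigLnew,\bigUnew]$ and $\pfunc^{\nodeSymb'}(\pv;\bigLnew,\bigUnew)>\UB{\pobj}$ for every $\pv\in[\bigLnew,\bigUnew]\setminus[\bigLnew',\bigUnew']$; and (iii)~the elementary monotonicity $\pfunc^{\nodeSymb'}(\pv;\bigL,\bigU)\geq\node{\pfunc}(\pv;\bigL,\bigU)$ valid for all $\pv\in\kR^{\pdim}$. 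Given these, claim \eqref{eq:peeling:constraint_mnew_3} is immediate from the chain of inclusions $[\bigLnew',\bigUnew']\subseteq[\bigLnew,\bigUnew]\subseteq[\bigL,\bigU]$.

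To justify the monotonicity (iii), I would observe that a child node $\nodeSymb'$ of $\nodeSymb$ is obtained by moving one index of $\setnone$ into $\setzero$ or into $\setone$, which merely appends one more constraint to the description of $\node{\mathcal{X}}$ (fixing the branched coordinate to zero or forcing it nonzero); hence $\mathcal{X}^{\nodeSymb'}\subseteq\node{\mathcal{X}}$. Since $\pfunc^{\nodeSymb'}(\pv;\bigL,\bigU)=\pfunc(\pv)+\indicator(\pv\in\mathcal{X}^{\nodeSymb'})+\indicator(\pv\in[\bigL,\bigU])$ and likewise for $\nodeSymb$, this inclusion gives $\indicator(\pv\in\mathcal{X}^{\nodeSymb'})\geq\indicator(\pv\in\node{\mathcal{X}})$ and therefore $\pfunc^{\nodeSymb'}(\pv;\bigL,\bigU)\geq\node{\pfunc}(\pv;\bigL,\bigU)$. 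I would also record the identity $\pfunc^{\nodeSymb'}(\pv;\bigL,\bigU)=\pfunc^{\nodeSymb'}(\pv;\bigLnew,\bigUnew)$, valid for every $\pv\in[\bigLnew,\bigUnew]$, which holds because $[\bigLnew,\bigUnew]\subseteq[\bigL,\bigU]$ forces both box indicators to vanish there.

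For \eqref{eq:peeling:constraint_3}, I would use the nested inclusions $[\bigLnew',\bigUnew']\subseteq[\bigLnew,\bigUnew]\subseteq[\bigL,\bigU]$ to partition
\begin{align*}
	[\bigL,\bigU]\setminus[\bigLnew',\bigUnew'] \;=\; \big([\bigL,\bigU]\setminus[\bigLnew,\bigUnew]\big)\;\cup\;\big([\bigLnew,\bigUnew]\setminus[\bigLnew',\bigUnew']\big),
\end{align*}
and handle the two pieces separately. If $\pv$ lies in the first piece, then \eqref{eq:peeling:constraint} at node $\nodeSymb$ gives $\node{\pfunc}(\pv;\bigL,\bigU)>\UB{\pobj}$, which (iii) upgrades to $\pfunc^{\nodeSymb'}(\pv;\bigL,\bigU)\geq\node{\pfunc}(\pv;\bigL,\bigU)>\UB{\pobj}$. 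If $\pv$ lies in the second piece, then $\pv\in[\bigLnew,\bigUnew]$, so the node-$\nodeSymb'$ analogue of \eqref{eq:peeling:constraint} from (ii) gives $\pfunc^{\nodeSymb'}(\pv;\bigLnew,\bigUnew)>\UB{\pobj}$, and the box-indicator identity turns this into $\pfunc^{\nodeSymb'}(\pv;\bigL,\bigU)=\pfunc^{\nodeSymb'}(\pv;\bigLnew,\bigUnew)>\UB{\pobj}$. Since the two pieces exhaust $[\bigL,\bigU]\setminus[\bigLnew',\bigUnew']$, this establishes \eqref{eq:peeling:constraint_3}.

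No genuine obstacle is anticipated here: the argument is pure bookkeeping with indicator functions and nested boxes, the one point deserving an explicit line being the inclusion $\mathcal{X}^{\nodeSymb'}\subseteq\node{\mathcal{X}}$ that drives the monotonicity used on the first piece of the partition.
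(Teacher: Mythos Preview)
Your proof is correct and follows essentially the same approach as the paper: both arguments use the inclusion $\mathcal{X}^{\nodeSymb'}\subseteq\node{\mathcal{X}}$ to obtain $\pfunc^{\nodeSymb'}(\pv;\bigL,\bigU)\geq\node{\pfunc}(\pv;\bigL,\bigU)$, the box-indicator identity $\pfunc^{\nodeSymb'}(\pv;\bigL,\bigU)=\pfunc^{\nodeSymb'}(\pv;\bigLnew,\bigUnew)$ for $\pv\in[\bigLnew,\bigUnew]$, and then combine the two pieces $[\bigL,\bigU]\setminus[\bigLnew,\bigUnew]$ and $[\bigLnew,\bigUnew]\setminus[\bigLnew',\bigUnew']$. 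Your write-up is merely more explicit about the partition than the paper's concluding ``combining'' line.
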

A proof of this result is available in \ifthenelse{\boolean{techreport}}{
	App.~\ref{sec:proofs}.
}{
	\cite[App.~\ref{tecreport-sec:proofs}]{peeling2023technicalreport}.
} 
In other words, \Cref{corollary:heritage} states that any peeled interval $[\bigLnew,\bigUnew]$ computed at node $\nodeSymb$ can be used as a starting point to apply a new peeling step at any child node $\nodeSymb'$. 
This allows to propagate the  peeled interval $[\bigLnew,\bigUnew]$ down the decision tree to hopefully improve sequentially the tightness of the convex relaxation \eqref{eq:node_relaxed_problem}.
 

\section{Numerical results} 
\label{sec:results}

This section reports an empirical study demonstrating the effectiveness of the proposed peeling procedure to accelerate the resolution of~\eqref{prob:prob-base} on a synthetic dataset.
\ifthenelse{\boolean{techreport}}{
	Additional simulation results can be found in App.~\ref{sec:additional numerical results}.
}{}
 We refer the reader to~\cite{Bertsimas2016,hastie2017extended} for an in-depth study of the statistical properties of the optimizer obtained from this problem.

\subsection{Experimental setup}

We consider instances of problem~\eqref{prob:prob-base} with dimensions \((\ddim,\pdim)=(100,150)\).
For each trial, new realizations of \(\dic,\obs\) and \(\lambda\) are generated as follows.
Each row of the dictionary \(\dic\) is drawn from a multivariate normal distribution with zero mean and covariance matrix \({\corrmatrix}\in\kR^{\pdim\times \pdim}\). 
The $(i,j)$th entry of $\corrmatrix$ is defined as 
\ifthenelse{\boolean{techreport}}{\(\corrmatrixel_{ij}=\corrparam^{\abs{i-j}}\), \(\forall i,j\in\intervint{1}{\pdim}\), with $\corrparam=0.1$.}{\(\corrmatrixel_{ij}=10^{-\abs{i-j}}\), \(\forall i,j\in\intervint{1}{\pdim}\).}
Each realization of \(\obs\) is generated in two steps.
We first create \ifthenelse{\boolean{techreport}}{a $\sparsitylevel$-sparse vector \(\groundtruth \in \kR^{\pdim}\) with evenly-distributed non-zero components, where $\sparsitylevel=5$.}{a \(5\)-sparse vector \(\groundtruth \in \kR^{\pdim}\) with evenly-distributed non-zero components.} 
The non-zero entries are defined as \(\groundtruthi_{\idxentry} = \sign{r_{\idxentry}} + r_{\idxentry}\) where  \(r_{\idxentry}\) is an independent realization of a zero-mean Gaussian with variance $\stdnnz^2$. 
We then set \(\obs = \dic\groundtruth + \noise\) for some  zero-mean white Gaussian noise \(\noise\). The variance of the noise is adjusted so that the $\mathrm{SNR} \triangleq 10 \log_{10}(\norm{\dic\groundtruth}{2}^2/\norm{\noise}{2}^2)$ is equal to \(15\)dB.
The parameter \(\reg\) is calibrated for each instance of~\eqref{prob:prob-base} using the cross-validation tools of the \textsc{L0Learn} package \cite{hazimeh2020fast} with the default parameters\ifthenelse{\boolean{techreport}}{. 
	More specifically, we call the \texttt{cv.fit} procedure that takes \(\obs\) and \(\dic\) as inputs and returns couples $(\pv_{\reg_i},c_{\reg_i})$ from a grid of values $\{\reg_i\}_{i \in \kN}$ selected data-dependently by the package.
	The vector $\pv_{\reg_i}$ is an approximate solution of~\eqref{prob:prob-base} where the weight of the $\ell_0$-norm is set to $\reg_i$ and \(c_{\reg_i}\) is an associated cross-validation score 
	computed on 10 folds of $\ddim/10$ randomly sampled rows in $\dic$ and entries in $\obs$.
	We then set
	\begin{equation}
		\reg = \kargmin_{\reg_i} \ c_{\reg_i} 
		\ \text{s.t.} \ 
		\kvvbar{\pv_{\reg_i}}_0 = \kvvbar{\groundtruth}_0
		.
	\end{equation}
}{, see \cite[Sec.~\ref{tecreport-sec:results}]{peeling2023technicalreport} for more details.} 

\subsection{Competing procedures}
We consider the following numerical solvers addressing~\eqref{prob:prob-base}: 
\textit{i)} \textsc{Cplex}~\cite{cplex2009v12}, a generic mixed-integer problem solver;
\textit{ii)} \textsc{L0bnb}~\cite{hazimeh2021sparse}, a standard \gls{bnb} procedure using a ``breadth-first search'' exploration strategy, see \cite[Sec.~3.3]{hazimeh2021sparse};  
\textit{iii)} \textsc{Sbnb}, a standard \gls{bnb} procedure using a ``depth-first search'' exploration strategy, see ~\cite[Sec.~2.2]{mhenni2020sparse};  
\textit{iv)} \textsc{Sbnb-N}, corresponding to \textsc{Sbnb} enhanced with additional ``node-screening'' techniques, see~\cite{guyard2022node};
\textit{v)} \textsc{Sbnb-P}, corresponding to \textsc{Sbnb} enhanced with the peeling strategy presented in this paper. 
\textsc{L0bnb}, \textsc{Sbnb}, \textsc{Sbnb-N} and \textsc{Sbnb-P} all use the same solving procedure for relaxed problem \eqref{eq:node_relaxed_problem}, namely a coordinate descent method~\cite{wright2015coordinate}.  
We use the C++ implementation of \textsc{Cplex}\footnote{
	\protect{\url{https://github.com/jump-dev/CPLEX.jl}}
}
and the Python implementation of \textsc{L0bnb}.\footnote{
	\protect{\url{https://github.com/hazimehh/L0Learn}}
}
\textsc{Sbnb}, \textsc{Sbnb-N} and \textsc{Sbnb-P} are implemented in Julia.\footnote{
	\protect{\url{https://github.com/TheoGuyard/BnbPeeling.jl}}
}

For \textsc{Sbnb-P}, peeling is applied at each iteration of the numerical 
procedure solving the relaxed problem \eqref{eq:node_relaxed_problem}.
We use the current iterate, say $\pv^{(k)}$, to define $\dv \triangleq \obs-\dic \pv^{(k)}$ 
and apply the peeling rules defined in \Cref{prop:entrywise_peeling} in parallel, \ie simultaneously for all the components of $\pv$.
The value of $\peelthres$ satisfying \eqref{eq:entrywise_peeling_treshold}, if any, is chosen as $\peelthres=\UB{\peelthres}+10^{-16}$. 
The peeled intervals are propagated through the decision tree as described in \Cref{sec:propagation}.

All the solving procedures are provided with the initial bounds $\bigL=-\bigM\1$ and $\bigU=\bigM\1$ for some proper value of $\bigM$.
This corresponds to the standard \textit{``Big-$M$''} constraint commonly considered in the literature \cite{mhenni2020sparse,ben2022global,bertsimas2021unified,bourguignon2015exact}. 
As far as our random simulation setup is concerned, it can be shown that \eqref{prob:prob-base} admits a unique minimizer $\opt{\pv}$ with probability one and  we thus choose $\bigM= \bigMfactor \norm{\opt{\pv}}{\infty}$ for some $\bigMfactor\geq 1$ in our simulations.
This requires to solve \eqref{prob:prob-base} once beforehand to identify $\opt{\pv}$.
This operation is here only done for the sake of comparing the sensibility of the solving methods to the choice of $\bigMfactor$.
\ifthenelse{\boolean{techreport}}{
	In practice, we obtain $\opt{\pv}$ by solving a sequence of problems with an increasing value of $\bigM$ in the \textit{Big-M} constraint.
	More specifically, letting $\opt{\pv}_{\bigM}$ denote the solution of \eqref{prob:prob-base}  with the additional constraint ``$-\bigM\1 \leq \pv \leq \bigM\1$'', we are guaranteed that $\opt{\pv}=\opt{\pv}_{\bigM}$ as soon as the strict inequality $\|\opt{\pv}_{\bigM}\|_{\infty} < \bigM$ holds. 
	We thus compute the solution $\opt{\pv}_{\bigM}$ for a 
	sequence of $\bigM$ of the form 
	$\{\eta^i\bigM_0\}_{i\in\kN}$ with $\eta = 1.1$ and for some $\bigM_0>0$ and stop as soon as $\|\opt{\pv}_{\bigM}\|_{\infty} < \bigM$.
}{
	More details are given in our companion paper~\cite[Sec.~\ref{tecreport-sec:results}]{peeling2023technicalreport}.
}

\subsection{Computational gains}

\begin{figure}[t]

\begin{tikzpicture}
	\begin{groupplot}[
		group style = {
			group size 			= 2 by 2,
			horizontal sep 		= 1cm,
			vertical sep 		= 1cm,
		},
		height				= 4cm,
		width				= 0.28\textwidth,
		grid 				= both,
		cycle list name 	= mylist,
		legend cell align 	= left,
		legend pos 			= north west,
		legend style = {
			fill			= white, 
			fill opacity	= 0.6, 
			draw opacity 	= 1, 
			text opacity 	= 1,
			font 			= \scriptsize,
		},
		xtick distance		= 1,
	]

		\nextgroupplot[
			title 			= {\small Solving time (sec.)},
			ymode 			= log,
			extra y ticks	= {10, 1000},
			xlabel 			= $\bigMfactor$,
		]
		\foreach \solver in {Cplex,L0bnb,Sbnb,SbnbNscr,SbnbPeel}{
			\addplot table [
				x		= bigmfactor,
				y		= \solver,
				col sep	= comma,
			] {dat/synt_opt_linear_LeastSquares_Bigm_5_100_150_1.0_0.1_1000.0_x_solve_time.csv};
			\label{solver:xp1:\solver}
		}
		\coordinate (top) at (rel axis cs:0,1);

		\nextgroupplot[
			title 	= {\small Gain},
			xlabel 	= $\bigMfactor$,
			ymin	= 0.9,
			ymax	= 2.1,
		]
		\addplot[smooth,ultra thick,black,mark=*,mark size=0.75pt] table [
			x		= bigmfactor,
			y expr	= {\thisrow{SbnbNscr}/\thisrow{SbnbPeel})},
			col sep	= comma,
		] {dat/synt_opt_linear_LeastSquares_Bigm_5_100_150_1.0_0.1_1000.0_x_solve_time.csv};
		\addplot[smooth,ultra thick,densely dashed] table [
			x		= bigmfactor,
			y expr	= {\thisrow{SbnbNscr}/\thisrow{SbnbPeel})},
			col sep	= comma,
		] {dat/synt_opt_linear_LeastSquares_Bigm_5_100_150_1.0_0.1_1000.0_x_node_count.csv};

		\nextgroupplot[
			ymode 			= log,
			extra y ticks	= {10, 1000},
			xlabel 			= $\stdnnz$,
		]
		\foreach \solver in {Cplex,L0bnb,Sbnb,SbnbNscr,SbnbPeel}{
			\addplot table [
				x		= sigma,
				y		= \solver,
				col sep	= comma,
			] {dat/synt_opt_linear_LeastSquares_Bigm_5_100_150_x_0.1_1000.0_5.0_solve_time.csv};
		}

		\nextgroupplot[
			xlabel 	= $\stdnnz$,
			ymin	= 0.9,
			ymax	= 2.1,
		]
		\addplot[smooth,ultra thick,black,mark=*,mark size=0.75pt] table [
			x		= sigma,
			y expr	= {\thisrow{SbnbNscr}/\thisrow{SbnbPeel})},
			col sep	= comma,
		] {dat/synt_opt_linear_LeastSquares_Bigm_5_100_150_x_0.1_1000.0_5.0_solve_time.csv};
		\addplot[smooth,ultra thick,densely dashed] table [
			x		= sigma,
			y expr	= {\thisrow{SbnbNscr}/\thisrow{SbnbPeel})},
			col sep	= comma,
		] {dat/synt_opt_linear_LeastSquares_Bigm_5_100_150_x_0.1_1000.0_5.0_node_count.csv};
		\coordinate (bot) at (rel axis cs:1,0);
	\end{groupplot}

	\path (top|-current bounding box.north)-- coordinate(legendpos) (bot|-current bounding box.north);
	\matrix[
		matrix of nodes,
		anchor=south,
		draw,
		inner sep=0.2em,
		every node/.style={anchor=base west, font=\small},
	] at([yshift=1ex]legendpos)
	{
		\ref{solver:xp1:Cplex} & \textsc{Cplex} & 
		\ref{solver:xp1:L0bnb} & \textsc{L0bnb} & 
		\ref{solver:xp1:Sbnb} & \textsc{Sbnb} & 
		\ref{solver:xp1:SbnbNscr} & \textsc{Sbnb-N} & 
		\ref{solver:xp1:SbnbPeel} & \textbf{\textsc{Sbnb-P}} \\
	};
\end{tikzpicture}
	\vspace*{-0.75cm}
    \caption{
		Left: Solving time as a function of $\bigMfactor$  (top, $\stdnnz=1$) and $\stdnnz$ (bottom, $\bigMfactor=5$). 
		Right: gain in terms of solving time (solid) and number of nodes explored (dashed) with respect to \textsc{Sbnb-N}.
	}
	\label{fig:sensibility}
	\vspace*{-0.5cm}
\end{figure}
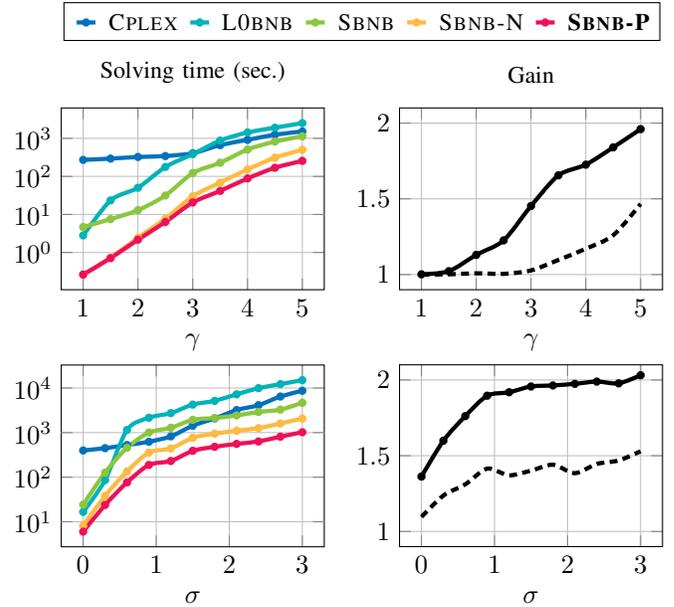

\Cref{fig:sensibility} presents the performance of the considered solving procedures. All results are averaged over 50 problem instances. Experiments were run on one Intel Xeon E5-2660 v3 CPU clocked at 2.60 GHz with 16 GB of RAM.
The left column in \Cref{fig:sensibility} represents the average solving time of each procedure as a function of $\bigMfactor$ (top) and $\stdnnz$ (bottom);
the right column illustrates the gain allowed by the proposed method in terms of solving time (solid) and number of nodes explored (dashed) as compared to its best competitor, that is \textsc{Sbnb-N}.    

We note that \textsc{Sbnb-P} leads to the smallest running time in  all the considered setups. 
Since the latter corresponds to \textsc{Sbnb} where peeling has been added, the spacing between the red and green curves materializes the gain provided by peeling. 
As far as our simulation setup is concerned, we see that the proposed method enables an acceleration of almost one order of magnitude with respect to \textsc{Sbnb}. 
It is noticeable that this acceleration occurs even if $\bigMfactor=1$, that is the \textit{Big-$M$} constraint is perfectly tuned to the problem at hand. This is due to the fact that peeling can refine \textit{individually} each component of the initial bounds $\bigL$ and $\bigU$ at \textit{each node} of the \gls{bnb} decision tree to fit the local geometry of the problem. 

We also notice that \textsc{Sbnb-P} improves over \textsc{Sbnb-N}, which can be seen as another acceleration of \textsc{Sbnb}. 
In particular, \textsc{Sbnb-P} performs always as well as \textsc{Sbnb-N} as emphasized by the gains in the right-hand side of Fig.~\ref{fig:sensibility}. 
We note in particular the gain provided by peeling in terms of number of nodes processed by the \gls{bnb} procedure: as expected, peeling allows for more aggressive pruning and thus reduces the number of nodes to be explored.

\section{Conclusion}

In this paper, we presented a tractable strategy, named ``\emph{peeling}'', to tighten the box constraints used in a \gls{bnb} procedure tailored to $\ell_0$-regularized least-squares problems. Unlike the standard approach which imposes \textit{one global} constraint to the problem, our strategy aims to locally refine the box constraints \textit{at each node} of the decision tree. 
This refinement enables to strengthen the convex relaxations used in the pruning decisions made by the \gls{bnb} procedure and can lead to significant improvements in terms of solving time, as emphasized by our simulation results.

\clearpage

\bibliographystyle{IEEEtran}
\bibliography{main.bib}

\begin{thebibliography}{10}
\providecommand{\url}[1]{#1}
\csname url@samestyle\endcsname
\providecommand{\newblock}{\relax}
\providecommand{\bibinfo}[2]{#2}
\providecommand{\BIBentrySTDinterwordspacing}{\spaceskip=0pt\relax}
\providecommand{\BIBentryALTinterwordstretchfactor}{4}
\providecommand{\BIBentryALTinterwordspacing}{\spaceskip=\fontdimen2\font plus
\BIBentryALTinterwordstretchfactor\fontdimen3\font minus
  \fontdimen4\font\relax}
\providecommand{\BIBforeignlanguage}[2]{{%
\expandafter\ifx\csname l@#1\endcsname\relax
\typeout{** WARNING: IEEEtran.bst: No hyphenation pattern has been}%
\typeout{** loaded for the language `#1'. Using the pattern for}%
\typeout{** the default language instead.}%
\else
\language=\csname l@#1\endcsname
\fi
#2}}
\providecommand{\BIBdecl}{\relax}
\BIBdecl

\bibitem{candes2005decoding}
E.~J. Candes and T.~Tao, ``Decoding by linear programming,'' \emph{IEEE
  transactions on information theory}, vol.~51, no.~12, pp. 4203--4215, 2005.

\bibitem{donoho2005stable}
D.~L. Donoho, M.~Elad, and V.~N. Temlyakov, ``Stable recovery of sparse
  overcomplete representations in the presence of noise,'' \emph{IEEE
  Transactions on information theory}, vol.~52, no.~1, pp. 6--18, 2005.

\bibitem{tropp2010computational}
J.~A. Tropp and S.~J. Wright, ``Computational methods for sparse solution of
  linear inverse problems,'' \emph{Proceedings of the IEEE}, vol.~98, no.~6,
  pp. 948--958, 2010.

\bibitem{Chen2012}
X.~Chen, D.~Ge, Z.~Wang, and Y.~Ye, ``Complexity of unconstrained l2-lp
  minimization,'' \emph{Mathematical Programming}, vol. 143, no. 1-2, pp.
  371--383, 2012.

\bibitem{foucart2013invitation}
S.~Foucart, H.~Rauhut, S.~Foucart, and H.~Rauhut, \emph{An Invitation To
  Compressive Sensing}.\hskip 1em plus 0.5em minus 0.4em\relax Springer, 2013.

\bibitem{cplex2009v12}
I.~I. Cplex, ``V12. 1: User’s manual for cplex,'' \emph{International
  Business Machines Corporation}, vol.~46, no.~53, p. 157, 2009.

\bibitem{bourguignon2015exact}
S.~Bourguignon, J.~Ninin, H.~Carfantan, and M.~Mongeau, ``Exact sparse
  approximation problems via mixed-integer programming: Formulations and
  computational performance,'' \emph{IEEE Transactions on Signal Processing},
  vol.~64, no.~6, pp. 1405--1419, 2015.

\bibitem{Bertsimas2016}
D.~Bertsimas, A.~King, and R.~Mazumder, ``Best subset selection via a modern
  optimization lens,'' \emph{The Annals of Statistics}, vol.~44, no.~2, pp.
  813--852, 2016.

\bibitem{bertsimas2021unified}
D.~Bertsimas, R.~Cory-Wright, and J.~Pauphilet, ``A unified approach to
  mixed-integer optimization problems with logical constraints,'' \emph{SIAM
  Journal on Optimization}, vol.~31, no.~3, pp. 2340--2367, 2021.

\bibitem{hazimeh2021sparse}
H.~Hazimeh, R.~Mazumder, and A.~Saab, ``Sparse regression at scale:
  Branch-and-bound rooted in first-order optimization,'' \emph{Mathematical
  Programming}, vol. 196, no. 1-2, pp. 347--388, 2022.

\bibitem{guyard2022node}
T.~Guyard, C.~Herzet, and C.~Elvira, ``Node-screening tests for the
  l0-penalized least-squares problem,'' in \emph{IEEE International Conference
  on Acoustics, Speech and Signal Processing (ICASSP)}.\hskip 1em plus 0.5em
  minus 0.4em\relax IEEE, 2022, pp. 5448--5452.

\bibitem{Wolsey:1998op}
L.~A. Wolsey, \emph{Integer Programming}.\hskip 1em plus 0.5em minus
  0.4em\relax John Wiley \& Sons, 2020.

\bibitem{ben2022global}
R.~Ben~Mhenni, S.~Bourguignon, and J.~Ninin, ``Global optimization for sparse
  solution of least squares problems,'' \emph{Optimization Methods and
  Software}, vol.~37, no.~5, pp. 1740--1769, 2022.

\bibitem{Beck2017aa}
A.~Beck, \emph{First-Order Methods In Optimization}.\hskip 1em plus 0.5em minus
  0.4em\relax SIAM, 2017.

\bibitem{mhenni2020sparse}
R.~B. Mhenni, S.~Bourguignon, M.~Mongeau, J.~Ninin, and H.~Carfantan, ``Sparse
  branch and bound for exact optimization of l0-norm penalized least squares,''
  in \emph{IEEE International Conference on Acoustics, Speech and Signal
  Processing (ICASSP)}.\hskip 1em plus 0.5em minus 0.4em\relax IEEE, 2020, pp.
  5735--5739.

\bibitem{camm1990cutting}
J.~D. Camm, A.~S. Raturi, and S.~Tsubakitani, ``Cutting big {M} down to size,''
  \emph{Interfaces}, vol.~20, no.~5, pp. 61--66, 1990.

\bibitem{hastie2017extended}
T.~Hastie, R.~Tibshirani, and R.~J. Tibshirani, ``Extended comparisons of best
  subset selection, forward stepwise selection, and the lasso,'' \emph{arXiv
  preprint arXiv:1707.08692}, 2017.

\bibitem{hazimeh2020fast}
H.~Hazimeh and R.~Mazumder, ``Fast best subset selection: Coordinate descent
  and local combinatorial optimization algorithms,'' \emph{Operations
  Research}, vol.~68, no.~5, pp. 1517--1537, 2020.

\bibitem{wright2015coordinate}
S.~J. Wright, ``Coordinate descent algorithms,'' \emph{Mathematical
  programming}, vol. 151, no.~1, pp. 3--34, 2015.

\end{thebibliography}

\ifthenelse{\boolean{techreport}}{
    \clearpage
    \appendices

\section{Proof of the main results}
\label{sec:proofs}

\subsection{Proof of \texorpdfstring{Lemma~\ref{lemma: consequence properties peeled interval}}{Lemma 1}}  
\label{proof:lemma:consequence_properties_peeled_interval}

We first have 
	\begin{align*}
		\node{\pobj}(\bigLnew,\bigUnew)\geq \node{\pobj}(\bigL,\bigU)
	\end{align*}
since $[\bigLnew,\bigUnew]\subseteq [\bigL,\bigU]$ from \eqref{eq:peeling:constraint_mnew} and the left-hand side corresponds to the minimum value of a problem more constrained than the right-hand side.
The reverse implication in \eqref{eq:peeling:equivalence} is thus always verified. 

The direct implication results from the following observations.	
If
$\node{\pobj}(\bigL,\bigU)\leq \UB{\pobj}$, then we have 
	\begin{align*}
		\node{\pobj}(\bigL,\bigU) = \node{\pobj}(\bigLnew,\bigUnew)
	\end{align*}
since \eqref{eq:peeling:constraint} ensures that the  minimizers of $\node{\pfunc}(\pv;\bigL,\bigU)$ do not belong to $[\bigL,\bigU]\setminus[\bigLnew,\bigUnew]$ and are therefore also minimizers of $\node{\pfunc}(\pv;\bigLnew,\bigUnew)$. 
We thus obtain the direct implication by contraposition.

\subsection{Proof of~\texorpdfstring{\eqref{eq:lower_bound_on_peeling_metric}}{(16)}} \label{proof:lower_bounds}

We first note that
\begin{subequations}
	\begin{align}
		\node{\pobj}_\peelthres(\bigL,\bigU) &= \inf_{\pv\in\kR^\pdim} \node{\pfunc}(\pv;\bigL,\bigU) + \indicator(\pvi_{\idxpeeled} > \peelthres) \\
		&\geq \inf_{\pv\in\kR^\pdim} \node{\rfunc}_{\idxpeeled}(\pv;\bigL,\bigU) + \indicator(\pvi_{\idxpeeled} > \peelthres)\\
		&\geq \inf_{\pv\in\kR^\pdim} \node{\rfunc}_{\idxpeeled}(\pv;\bigL,\bigU) + \indicator(\pvi_{\idxpeeled} \geq \peelthres) \\
		&= \min_{\pv\in\kR^\pdim} \node{\rfunc}_{\idxpeeled}(\pv;\bigL,\bigU) + \indicator(\pvi_{\idxpeeled} \geq \peelthres)\label{eq:relaxed_perturbed_problem_b}
	\end{align}
\end{subequations}
where 
\begin{multline} 
	\node{\rfunc}_{\idxpeeled}(\pv;\bigL,\bigU)
	\triangleq 
	\tfrac{1}{2}\kvvbar{\obs - \dic\pv}_2^2 
	+ \reg \sum_{\idxentry \in \setnone\backslash \idxpeeled} \Big(\tfrac{\pospart{\pvi_{\idxentry}}}{\bigUi_\idxentry}
	- \tfrac{\pospart{-\pvi_{\idxentry}}}{\bigLi_\idxentry}\Big)
	\\
	+ \reg (\card{\setone}+1)
	+ \indicator(\subzero{\pv} = \0)
	+ \indicator(\pv \in [\bigL,\bigU]) 
\end{multline}
and the last equality follows from the fact that the objective function is lower-semi-continuous and its domain is closed and bounded. 

The lower bound in \eqref{eq:lower_bound_on_peeling_metric} then stems from the Fenchel-Rockafellar dual problem of \eqref{eq:relaxed_perturbed_problem_b}.
More specifically,  we first notice that \eqref{eq:relaxed_perturbed_problem_b} can be rewritten as
\begin{equation}\label{eq:primal_formulation}
	\min_{\pv\in\kR^\pdim} f(\dic\pv) 
	+ g(\pv)
\end{equation}
with
\begin{subequations}
	\begin{align}
		\label{eq:def-f-peel}
		f(\mathbf{z}) \,=\,& \tfrac{1}{2}\kvvbar{\obs - \mathbf{z}}_2^2
		\\
		\label{eq:def-g-peel}
		g(\pv) \,=\,& g_\idxpeeled(\pvi_\idxpeeled) + \sum_{\idxentry\neq \idxpeeled} 
		g_\idxentry(\pvi_\idxentry)
	\end{align}
\end{subequations}
and
\begin{align*}
	g_\idxpeeled(\pvi)
	&= \indicator(\pvi \in [\peelthres,\bigUi_{\idxpeeled}]) + \lambda\\
	g_\idxentry(\pvi)
	&=
	\begin{cases}
		\indicator(\pvi=0) 	&\text{ if } \idxentry\in\setzero \\
		\indicator(\pvi \in [\bigLi_{\idxentry},\bigUi_{\idxentry}]) + \lambda &\text{ if } \idxentry\in\setone \\
		\indicator(\pvi \in [\bigLi_{\idxentry},\bigUi_{\idxentry}]) + \lambda(\tfrac{\pospart{\pvi}}{\bigUi_\idxentry} - \tfrac{\pospart{-\pvi}}{\bigLi_\idxentry}) & \text{ if } \idxentry\in\setnone\backslash \idxpeeled.
	\end{cases}
\end{align*}
Second, we have from standard results of convex optimization~\cite[Ch.~12]{Beck2017aa} that the Fenchel-Rockafellar dual problem of~\eqref{eq:primal_formulation} reads
\begin{equation}
	\label{eq:generic_dual}
	\max_{\dv\in\kR^\ddim} \ -\conj{f}(-\dv) - \conj{g}(\transp{\dic}\dv),
\end{equation}
where $\conj{f}$ and $\conj{g}$ denote the convex conjugates of \(f\) and \(g\) (see~\cite[Def.~4.1]{Beck2017aa}).
In particular, we have \(\forall \pv\in\kR^\pdim, \dv\in\kR^\ddim\): 
\begin{align}\label{eq:duality}
	f(\dic\pv) + g(\pv)
	\geq
	-\conj{f}(-\dv) - \conj{g}(\transp{\dic}\dv)
\end{align}
as a consequence of weak duality applied to the pair~\eqref{eq:primal_formulation}-\eqref{eq:generic_dual}. 
Our lower bound in \eqref{eq:lower_bound_on_peeling_metric}  corresponds to a particularization of the right-hand side of \eqref{eq:duality} to \eqref{eq:def-f-peel}-\eqref{eq:def-g-peel}. 
In particular, applying the definition of the convex conjugate to the function $f$, one easily obtains:
\begin{equation}\label{eq:conj f}
	\conj{f}(\dv) = -\tfrac{1}{2}\norm{\obs}{2}^2 + \tfrac{1}{2}\norm{\obs + \dv}{2}^2
	.
\end{equation}
Invoking \Cref{lemma:conj-setone} in \Cref{app:proof} with \((b,\bigLi,\bigUi)=(\lambda,\peelthres,\bigUi_\idxpeeled)\), we obtain
\begin{equation}
	\label{eq:conj_g2}
	\conj{g}_\idxpeeled(\cvi)
	=
	\bigUi_{\idxpeeled}\pospart{\cvi } - \peelthres \pospart{-\cvi } - \reg.
\end{equation}
Applying again \Cref{lemma:conj-setone} with
\begin{equation*}
	(b,\bigLi,\bigUi) = 
	\begin{cases}
		 (0,0,0) &\text{if} \ \idxentry \in \setzero \\
		(\reg,\bigLi_{\idxentry},\bigUi_{\idxentry}) &\text{if} \ \idxentry \in \setone,
	\end{cases}
\end{equation*} 
leads to 
\begin{equation}
	\label{eq:conj_g1}
	\conj{g}_\idxentry(\cvi)
	=
	\begin{cases}
		0 	&\text{ if } \idxentry\in\setzero \\
		\pospart{\bigUi_{\idxentry}\cvi } + \pospart{\bigLi_{\idxentry}\cvi} - \reg &\text{ if } \idxentry\in\setone 
	\end{cases}
\end{equation}
where we have used the fact that for all \(\idxentry \in \setone\), \(\bigUi_{\idxentry}\pospart{\cvi}=\pospart{\bigUi_{\idxentry}\cvi}\) (resp. \(\bigLi_{\idxentry}\pospart{-\cvi}=-\pospart{\bigLi_{\idxentry}\cvi}\)) since \(\bigUi_{\idxentry}\geq0\) (resp. \(\bigLi_{\idxentry} \leq0\)). 	
Similarly, using \Cref{lemma:conj-setnone} in \Cref{app:proof} with $(a,\bigLi,\bigUi) = (\reg,\bigLi_{\idxentry},\bigUi_{\idxentry}) $ 
we obtain: 
\begin{equation}
	\label{eq:conj_g3}
	\conj{g}_\idxentry(\cvi)
	=
	\pospart{\bigUi_{\idxentry}\cvi  - \reg } + \pospart{\bigLi_{\idxentry}\cvi  -\reg} \quad \text{ if } \idxentry\in\setnone\backslash \idxpeeled.
\end{equation}	
Finally, combining \eqref{eq:conj f}-\eqref{eq:conj_g3} leads to the desired result. \\

\subsection{Proof of \texorpdfstring{Lemma~\ref{lemma:parallel_peeling}}{Lemma~3}} 
\label{sec:proof_lemma_parallel_application}

Let \([\bigL',\bigU']\) and \([\bigL'',\bigU'']\) be two intervals satisfying~\eqref{eq:peeling:constraint}-\eqref{eq:peeling:constraint_mnew}, and defines \(\mathcal{S}=[\bigL',\bigU']\cap[\bigL'',\bigU'']\).
We first note that \([\bigL',\bigU']\cap[\bigL'',\bigU'']\) can be rewritten as
\begin{equation*}
	\mathcal{S}
	=
	\bigtimes_{\idxentry=1}^\pdim \kintervcc{
		\max(\bigLi'_\idxentry, \bigLi''_\idxentry)
	}{
		\min(\bigUi'_\idxentry, \bigUi''_\idxentry)
	}
\end{equation*}
where \(\bigtimes\) denotes the Cartesian product of sets, and therefore defines an interval.
It thus remains to show that \(\mathcal{S}\) fulfills~\eqref{eq:peeling:constraint}-\eqref{eq:peeling:constraint_mnew}.

First, the inclusion \(\mathcal{S}\subset[\bigL,\bigU]\) follows from the fact that both \([\bigL',\bigU']\) and \([\bigL'',\bigU'']\) verify \eqref{eq:peeling:constraint_mnew}. 
Second, the fact that \(\mathcal{S}\) fulfills~\eqref{eq:peeling:constraint} is a consequence of the following set equality: 
\begin{equation*}
	[\bigL,\bigU]\setminus\mathcal{S} = \kparen{[\bigL,\bigU]\setminus[\bigL',\bigU']}\cup \kparen{[\bigL,\bigU]\setminus[\bigL'',\bigU'']}
	.
\end{equation*}
Hence, if \(\pv\in[\bigL,\bigU]\setminus\mathcal{S}\) then \(\pv\in\kparen{[\bigL,\bigU]\setminus[\bigL',\bigU']}\) or \(\pv\in\kparen{[\bigL,\bigU]\setminus[\bigL'',\bigU'']}\).
Assume without loss of generality that \(\pv\in\kparen{[\bigL,\bigU]\setminus[\bigL',\bigU']}\).
Since \([\bigL',\bigU']\) fulfills~\eqref{eq:peeling:constraint}, we then have \(\node{\pfunc}(\pv;\bigL,\bigU)>\UB{\pobj}\).
One concludes the proof by noting that the latter rationale holds for all \(\pv\in[\bigL,\bigU]\setminus\mathcal{S}\).

\subsection{Proof of {Lemma}~\texorpdfstring{\ref{corollary:heritage}}{4}} \label{sec:proof:corollary:heritage}

If the peeling procedure defined in \Cref{prop:entrywise_peeling} is applied at node $\nodeSymb'$ with $[\bigLnew,\bigUnew]$  as input to generate a new interval $[\bigLnew',\bigUnew']$, we have by construction:
	\begin{subequations}
	\begin{align}
		\forall \pv\in[\bigLnew,\bigUnew]\setminus [\bigLnew',\bigUnew']&:\,
		\pfunc^{\nodeSymb'}(\pv;\bigLnew,\bigUnew)>\UB{\pobj}
		\label{eq:peeling:constraint_2}
		\\
		[\bigLnew',\bigUnew']&\subseteq [\bigLnew,\bigUnew]
		.
		\label{eq:peeling:constraint_mnew_2}
	\end{align}
	\end{subequations}
We note that \eqref{eq:peeling:constraint_2} is equivalent to 
 	\begin{align}\label{eq:peeling:constraint_2b}
		\forall \pv\in[\bigLnew,\bigUnew]\setminus [\bigLnew',\bigUnew']:\,
		\pfunc^{\nodeSymb'}(\pv;\bigL,\bigU)>\UB{\pobj}
	\end{align}
since $\pv\in [\bigLnew,\bigUnew]$ and $[\bigLnew,\bigUnew]\subseteq[\bigL,\bigU]$ by hypothesis. 

Moreover, we have 
	\begin{align*}
		\pfunc^{\nodeSymb'}(\pv;\bigL,\bigU) \geq \node{\pfunc}(\pv;\bigL,\bigU)
		.
	\end{align*}
since $\mathcal{X}^{\nodeSymb'} \subset \node{\mathcal{X}}$ for any child node $\nodeSymb'$ of $\nodeSymb$. 
Hence, any interval $[\bigLnew,\bigUnew]$ verifying \eqref{eq:peeling:constraint} at node $\nodeSymb$ obviously also fulfills \eqref{eq:peeling:constraint} at $\nodeSymb'$, that is 
	\begin{align}\label{eq:peeling:constraint ho ho}
		\forall \pv\in[\bigL,\bigU]\setminus [\bigLnew,\bigUnew]:\,
		\pfunc^{\nodeSymb'}(\pv;\bigL,\bigU)>\UB{\pobj}
		.
	\end{align}
Combining \eqref{eq:peeling:constraint_mnew_2}, \eqref{eq:peeling:constraint_2b} and \eqref{eq:peeling:constraint ho ho}, we finally obtain the result. 

\section{Technical lemmas} 
\label{app:proof}

In this appendix, we derive the expressions of two conjugate functions appearing in the derivation of \eqref{eq:lower_bound_on_peeling_metric}. The results are encapsulated in \Cref{lemma:conj-setone,lemma:conj-setnone} below. 

\begin{lemma}
	\label{lemma:conj-setone}
	Let \(\kfuncdef{\regfunc}{\kR}{\kR}\) be defined for all \(\pvi\in\kR\) as
	\begin{equation*}
		\regfunc(\pvi) = 
		b + \indicator(\pvi\in[\bigLi,\bigUi])		
	\end{equation*}
	for $b\in\kR$ and \(\bigLi \leq \bigUi\). 
	Then, 
	\begin{equation*}
		\regfunc^\star(\cvi) =
		\bigUi \pospart{\cvi} - \bigLi \pospart{-\cvi} -b
		.
	\end{equation*}	
\end{lemma}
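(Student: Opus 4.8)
The plan is to compute $\conj{\regfunc}$ directly from the definition of the convex conjugate. By \cite[Def.~4.1]{Beck2017aa}, for $\cvi\in\kR$ we have
\begin{equation*}
	\conj{\regfunc}(\cvi) = \sup_{\pvi\in\kR}\big(\cvi\pvi - \regfunc(\pvi)\big)
	= \sup_{\pvi\in\kR}\big(\cvi\pvi - b - \indicator(\pvi\in[\bigLi,\bigUi])\big).
\end{equation*}
Since the indicator term is $0$ on $[\bigLi,\bigUi]$ and $+\infty$ elsewhere, and since $b$ is a constant, the supremum reduces to a linear maximization over a compact interval:
\begin{equation*}
	\conj{\regfunc}(\cvi) = -b + \sup_{\pvi\in[\bigLi,\bigUi]} \cvi\pvi.
\end{equation*}

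Next I would evaluate $\sup_{\pvi\in[\bigLi,\bigUi]}\cvi\pvi$ by a case split on the sign of $\cvi$. If $\cvi\geq 0$, the linear map $\pvi\mapsto\cvi\pvi$ is nondecreasing, so the supremum is attained at $\pvi=\bigUi$ and equals $\bigUi\cvi$; moreover $\pospart{\cvi}=\cvi$ and $\pospart{-\cvi}=0$, so this value coincides with $\bigUi\pospart{\cvi}-\bigLi\pospart{-\cvi}$. If $\cvi<0$, the map is decreasing, so the supremum is attained at $\pvi=\bigLi$ and equals $\bigLi\cvi = -\bigLi\,(-\cvi) = -\bigLi\pospart{-\cvi}$; since $\pospart{\cvi}=0$ here, this again equals $\bigUi\pospart{\cvi}-\bigLi\pospart{-\cvi}$. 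In both cases
\begin{equation*}
	\sup_{\pvi\in[\bigLi,\bigUi]}\cvi\pvi = \bigUi\pospart{\cvi} - \bigLi\pospart{-\cvi},
\end{equation*}
and substituting back gives $\conj{\regfunc}(\cvi) = \bigUi\pospart{\cvi} - \bigLi\pospart{-\cvi} - b$, as claimed.

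There is no real obstacle here: the argument is an elementary supremum computation, and the only point requiring a small amount of care is checking that the two branches of the sign analysis agree on the boundary $\cvi=0$ (they do, both yielding $\bigUi\cdot 0$) and that the degenerate case $\bigLi=\bigUi$, where $[\bigLi,\bigUi]$ is a singleton, is still covered correctly by the same formula. Writing out the cases as above makes this transparent.
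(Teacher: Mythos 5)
Your computation is correct and follows exactly the paper's own argument: reduce the conjugate to a linear maximization over the compact interval $[\bigLi,\bigUi]$ and evaluate it as $\bigUi\pospart{\cvi}-\bigLi\pospart{-\cvi}$. The only difference is that you spell out the sign cases explicitly, whereas the paper states the final value directly.
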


\begin{proof}
	By definition of the convex conjugate (see~\cite[Def.~4.1]{Beck2017aa}), one has \(\forall \cvi\in\kR\):
	\begin{subequations}
	\begin{align*}
		\conj{\regfunc}(\cvi) 
		&= \sup_{\pvi\in\kR} \; \cvi\pvi - g(\pvi)\\
		&= \max_{\bigLi\leq \pvi \leq \bigUi} \; \cvi\pvi - b\\
		&= \bigUi \pospart{\cvi} - \bigLi \pospart{-\cvi} -b
	\end{align*}
	\end{subequations}
\end{proof}

\newcommand{\funcsymbAppBLemmaTwo}{\regfunc}
\begin{lemma}
	\label{lemma:conj-setnone}
	Let \(\kfuncdef{\funcsymbAppBLemmaTwo}{\kR}{\kR}\) be defined for all \(\pvi\in\kR\) as
	\begin{equation*}
		\funcsymbAppBLemmaTwo(\pvi) = 
		\indicator(\pvi\in[\bigLi,\bigUi])		
		+\tfrac{a}{\bigUi} \pospart{\pvi} 
		-\tfrac{a}{\bigLi}\pospart{-\pvi} 
	\end{equation*}
	for $a \geq 0$ and \(\bigLi < 0 < \bigUi\). 
	Then, 
	\begin{equation*}
		\conj{\funcsymbAppBLemmaTwo}(\cvi) 
		= 
		\pospart{\bigUi\cvi-a} 
		+ \pospart{\bigLi\cvi - a} 
		.
	\end{equation*}	
	This expression remains valid for \(\bigLi \leq 0 \leq \bigUi\) as long as we use the convention ``$0/0=0$''.
\end{lemma}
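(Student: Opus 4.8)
The plan is to compute $\conj{\funcsymbAppBLemmaTwo}$ directly from the definition of the convex conjugate, exploiting the fact that $\funcsymbAppBLemmaTwo$ is the sum of the indicator of the compact interval $[\bigLi,\bigUi]$ and a continuous piecewise-linear term, so that the supremum defining $\conj{\funcsymbAppBLemmaTwo}(\cvi)$ is the maximum of a piecewise-affine function over a compact set and is attained. Concretely, by \cite[Def.~4.1]{Beck2017aa},
\begin{equation*}
	\conj{\funcsymbAppBLemmaTwo}(\cvi)
	= \sup_{\bigLi \leq \pvi \leq \bigUi} \Big( \cvi\pvi - \tfrac{a}{\bigUi}\pospart{\pvi} + \tfrac{a}{\bigLi}\pospart{-\pvi} \Big),
\end{equation*}
and I would split the feasible interval at the origin into $[0,\bigUi]$ and $[\bigLi,0]$, on which $\pospart{-\pvi}$ and $\pospart{\pvi}$ respectively vanish.

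On $[0,\bigUi]$ the objective reduces to the affine function $(\cvi - \tfrac{a}{\bigUi})\pvi$ of $\pvi\geq 0$, whose maximum over $[0,\bigUi]$ is $\bigUi\cvi - a$ when $\cvi \geq \tfrac{a}{\bigUi}$ and $0$ otherwise, i.e. exactly $\pospart{\bigUi\cvi - a}$. Symmetrically, on $[\bigLi,0]$ the objective reduces to $(\cvi - \tfrac{a}{\bigLi})\pvi$ with $\pvi \leq 0$; since $\bigLi < 0$, its maximum is $\bigLi\cvi - a$ when $\cvi \leq \tfrac{a}{\bigLi}$ (equivalently $\bigLi\cvi - a \geq 0$) and $0$ otherwise, i.e. exactly $\pospart{\bigLi\cvi - a}$. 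Hence the overall supremum equals $\max\big(\pospart{\bigUi\cvi - a},\,\pospart{\bigLi\cvi - a}\big)$, and it remains to identify this maximum with the claimed sum.

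This last identification is the one step that requires an argument rather than routine algebra, and I expect it to be the main (minor) obstacle: I would observe that $\pospart{\bigUi\cvi - a} > 0$ forces $\cvi > \tfrac{a}{\bigUi} \geq 0$, whereas $\pospart{\bigLi\cvi - a} > 0$ forces $\cvi < \tfrac{a}{\bigLi} \leq 0$, so — because $\bigLi < 0 < \bigUi$ and $a \geq 0$ — at most one of the two positive parts is nonzero, and therefore their maximum coincides with their sum. For the degenerate case $\bigLi \leq 0 \leq \bigUi$ with the convention $0/0 = 0$: if $\bigUi = 0$ the piece $[0,\bigUi]$ collapses to $\{0\}$, the term $\tfrac{a}{\bigUi}\pospart{\pvi}$ vanishes on the remaining domain by the convention, the corresponding sub-supremum is $0 = \pospart{-a} = \pospart{\bigUi\cvi - a}$, and the rest of the argument applies verbatim; the case $\bigLi = 0$ is handled identically. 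The only bookkeeping point to be careful about throughout is the sign flip when passing from the condition on $\cvi - \tfrac{a}{\bigLi}$ to the condition on $\bigLi\cvi - a$, which uses $\bigLi < 0$.
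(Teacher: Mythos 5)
Your proof is correct and follows essentially the same route as the paper's: split the supremum at the origin, compute each piece as the maximum of an affine function over a compact interval to get $\pospart{\bigUi\cvi-a}$ and $\pospart{\bigLi\cvi-a}$ respectively, and then convert the resulting maximum into a sum by noting that the two positive parts cannot be simultaneously nonzero. The only cosmetic difference is that the paper factors the per-piece computation through an auxiliary lemma (via the change of variables $\pvi=\bufu\pvi'$) that also absorbs the degenerate $0/0$ case, whereas you compute each piece directly and treat $\bigUi=0$ or $\bigLi=0$ separately.
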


\newcommand{\bufcvi}{\beta}
\newcommand{\bufu}{\mu}
\begin{proof}
	Prior to proving the lemma, let us first show that the following result holds true: for all scalars \(\bufcvi\in\kR,\bufu\geq 0\), we have
	\begin{equation}
		\label{eq:proof:lemma:conj setnone:intermediary_result}
		\max_{0 \leq \pvi \leq \bufu} \; \bufcvi\pvi - \tfrac{a}{\bufu}\pvi = \pospart{\bufu\bufcvi - a}.
	\end{equation}
	Note that the maximum in~\eqref{eq:proof:lemma:conj setnone:intermediary_result} is attained since the optimization problem amounts to maximizing a linear function over a (nonempty) compact set.
	Now, if \(\bufu=0\) we have
	\begin{equation*}
		\max_{0 \leq \pvi \leq \bufu} \; \bufcvi\pvi - \tfrac{a}{\bufu}\pvi 
		= 0
	\end{equation*} 
	by using the convention ``$0/0=0$''. 
	Moreover, if \(\bufu>0\):	
	\begin{subequations}
		\begin{align*}
			\max_{0 \leq \pvi \leq \bufu} \; \bufcvi\pvi - \tfrac{a}{\bufu}\pvi
			\,=\,& \max_{0 \leq \pvi' \leq 1} \; \pvi'(\bufu\bufcvi - a)
			\\
			\,=\,& \pospart{\bufu\bufcvi - a}.
		\end{align*}
	\end{subequations}	 
	Overall, one can summarize the two cases as
	\begin{equation}
		\max_{0 \leq \pvi \leq \bufu} \; \bufcvi\pvi - \tfrac{a}{\bufu}\pvi
		=
		\pospart{\bufu\bufcvi - a}
	\end{equation}
	since \(a\geq 0\).

	\vspace*{.2cm}
	We now turn to the proof of \Cref{lemma:conj-setnone}. 
	By definition of the convex conjugate (see~\cite[Def.~4.1]{Beck2017aa}), one has \(\forall \cvi\in\kR\): 
	\begin{equation*}
		\conj{\funcsymbAppBLemmaTwo}(\cvi) = \sup_{\pvi\in\kR} \; \cvi\pvi - \funcsymbAppBLemmaTwo(\pvi).
	\end{equation*}
	The above problem can be equivalently expressed as
	\begin{equation}
		\label{eq:conj-split}
		\conj{\funcsymbAppBLemmaTwo}(\cvi) = \max\big\{\conj{\funcsymbAppBLemmaTwo}_1(\cvi) ; \conj{\funcsymbAppBLemmaTwo}_2(\cvi)\big\}
	\end{equation}
	where 
	\begin{subequations}
		\begin{align*}
			\conj{\funcsymbAppBLemmaTwo}_1(\cvi) &\triangleq \sup_{\pvi \geq 0} \; \cvi\pvi - \funcsymbAppBLemmaTwo(\pvi) \\
			\conj{\funcsymbAppBLemmaTwo}_2(\cvi) &\triangleq \sup_{\pvi \leq 0} \; \cvi\pvi - \funcsymbAppBLemmaTwo(\pvi).
		\end{align*}
	\end{subequations}
	On the one hand, \(\conj{\funcsymbAppBLemmaTwo}_1(\cvi)\) can be rewritten as 
	\begin{subequations}
		\begin{align*}
			\conj{\funcsymbAppBLemmaTwo}_1(\cvi) 
			\,=\,& \sup_{0 \leq \pvi \leq \bigUi} \; \cvi\pvi - \tfrac{a}{\bigUi}\pvi 
			.
		\end{align*}
	\end{subequations}
	By applying~\eqref{eq:proof:lemma:conj setnone:intermediary_result} with \((\bufcvi,\bufu) = (\cvi,\bigUi)\), we then obtain:
	\begin{equation} \label{eq:expression g1*}
		\conj{\funcsymbAppBLemmaTwo}_1(\cvi) = \pospart{\bigUi\cvi - a}
		.
	\end{equation}
	On the other hand, we have
	\begin{subequations}
		\begin{align*}
			\conj{\funcsymbAppBLemmaTwo}_2(\cvi) 
			\,=\,& \sup_{\bigLi \leq \pvi \leq 0} \; \cvi\pvi - \tfrac{a}{\bigLi}\pvi
			\nonumber \\
			\,=\,& \sup_{0 \leq \pvi' \leq -\bigLi} \; -\cvi\pvi' - \tfrac{a}{-\bigLi}\pvi' 
			.
		\end{align*}
	\end{subequations}
	Hence, using ~\eqref{eq:proof:lemma:conj setnone:intermediary_result} with \((\bufcvi, \bufu)=(\cvi,-\bigLi)\) leads to 
	\begin{equation} \label{eq:expression_g2_star}
		\conj{\funcsymbAppBLemmaTwo}_2(\cvi)
		=
		\pospart{\bigLi\cvi - a}
		.
	\end{equation}
	Finally, plugging \eqref{eq:expression g1*}-\eqref{eq:expression_g2_star} into \eqref{eq:conj-split} yields
	\begin{subequations}
		\begin{align*}
			\conj{\funcsymbAppBLemmaTwo}(\cvi) 
			&= \max\big\{
				\pospart{\bigUi\cvi - a}
				;
				\pospart{\bigLi\cvi - a}
				\big\} \\
			&= \pospart{\bigUi\cvi - a} + \pospart{\bigLi\cvi - a}
		\end{align*}
	\end{subequations}
	where the last equality follows from the fact that
	\begin{align*}
		\bigUi\cvi - a>0 &\implies \bigLi\cvi - a\leq 0
		.
	\end{align*}
	Indeed, if $\bigUi\cvi - a>0$ then necessarily $\bigUi>0$ since $a\geq 0$. We thus have $\cvi >\tfrac{a}{\bigUi}\geq 0$.
	This implies that $\bigLi\cvi - a\leq 0$ since $\bigLi\leq 0$ and $a\geq 0$.
\end{proof}


\section{Additional numerical results} 
\label{sec:additional numerical results}

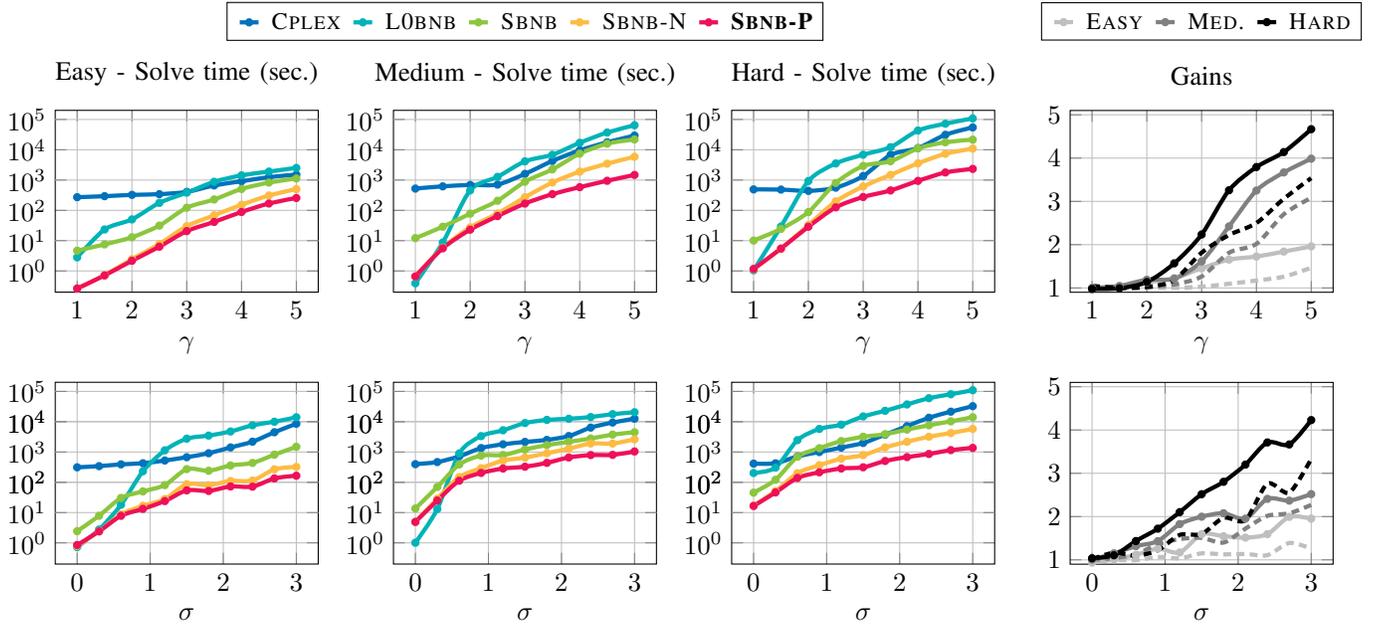
\begin{figure*}

\begin{tikzpicture}
	\begin{groupplot}[
		group style = {
			group size 			= 4 by 2,
			horizontal sep 		= 1cm,
			vertical sep 		= 1.2cm, 
		},
		height				= 4cm,
		width				= 0.28\textwidth,
		grid 				= both,
		cycle list name 	= mylist,
		legend cell align 	= left,
		legend pos 			= north west,
		legend style = {
			fill			= white, 
			fill opacity	= 0.6, 
			draw opacity 	= 1, 
			text opacity 	= 1,
			font 			= \scriptsize,
		},
		xtick distance		= 1,
	]

		\nextgroupplot[
			title 			= Easy - Solve time (sec.),
			ymode 			= log,
			ymin			= 2e-1,
			ymax			= 2e5,
			extra y ticks	= {10, 1000, 100000},
			xlabel 			= $\bigMfactor$,
		]
		\foreach \solver in {Cplex,L0bnb,Sbnb,SbnbNscr,SbnbPeel}{
			\addplot table [
				x		= bigmfactor,
				y		= \solver,
				col sep	= comma,
			] {dat/easy_synt_opt_linear_LeastSquares_Bigm_5_100_150_1.0_0.1_1000.0_x_solve_time.csv};
			\label{solver:\solver}
		}
		\coordinate (top) at (rel axis cs:0,1);

		\nextgroupplot[
			title 			= Medium - Solve time (sec.),
			ymode 			= log,
			ymin			= 2e-1,
			ymax			= 2e5,
			extra y ticks	= {10, 1000, 100000},
			xlabel 			= $\bigMfactor$,
		]
		\foreach \solver in {Cplex,L0bnb,Sbnb,SbnbNscr,SbnbPeel}{
			\addplot table [
				x		= bigmfactor,
				y		= \solver,
				col sep	= comma,
			] {dat/medium_synt_opt_linear_LeastSquares_Bigm_7_100_150_1.0_0.1_1000.0_x_solve_time.csv};
		}

		\nextgroupplot[
			title 			= Hard - Solve time (sec.),
			ymode 			= log,
			ymin			= 2e-1,
			ymax			= 2e5,
			extra y ticks	= {10, 1000, 100000},
			xlabel 			= $\bigMfactor$,
		]
		\foreach \solver in {Cplex,L0bnb,Sbnb,SbnbNscr,SbnbPeel}{
			\addplot table [
				x		= bigmfactor,
				y		= \solver,
				col sep	= comma,
			] {dat/hard_synt_opt_linear_LeastSquares_Bigm_7_100_150_1.0_0.8_1000.0_x_solve_time.csv};
		}

		\nextgroupplot[
			title = Gains,
			xlabel 	= $\bigMfactor$,
			ymin	= 0.9,
			ymax	= 5.1,
		]
		\addplot[smooth,ultra thick,lightgray,mark=*,mark size=0.75pt] table [
			x		= bigmfactor,
			y expr	= {\thisrow{SbnbNscr}/\thisrow{SbnbPeel})},
			col sep	= comma,
		] {dat/easy_synt_opt_linear_LeastSquares_Bigm_5_100_150_1.0_0.1_1000.0_x_solve_time.csv};
		\label{instance:easy}
		\addplot[smooth,ultra thick,densely dashed,lightgray] table [
			x		= bigmfactor,
			y expr	= {\thisrow{SbnbNscr}/\thisrow{SbnbPeel})},
			col sep	= comma,
		] {dat/easy_synt_opt_linear_LeastSquares_Bigm_5_100_150_1.0_0.1_1000.0_x_node_count.csv};
		\addplot[smooth,ultra thick,gray,mark=*,mark size=0.75pt] table [
			x		= bigmfactor,
			y expr	= {\thisrow{SbnbNscr}/\thisrow{SbnbPeel})},
			col sep	= comma,
		] {dat/medium_synt_opt_linear_LeastSquares_Bigm_7_100_150_1.0_0.1_1000.0_x_solve_time.csv};
		\label{instance:medium}
		\addplot[smooth,ultra thick,densely dashed,gray] table [
			x		= bigmfactor,
			y expr	= {\thisrow{SbnbNscr}/\thisrow{SbnbPeel})},
			col sep	= comma,
		] {dat/medium_synt_opt_linear_LeastSquares_Bigm_7_100_150_1.0_0.1_1000.0_x_node_count.csv};
		\addplot[smooth,ultra thick,black,mark=*,mark size=0.75pt] table [
			x		= bigmfactor,
			y expr	= {\thisrow{SbnbNscr}/\thisrow{SbnbPeel})},
			col sep	= comma,
		] {dat/hard_synt_opt_linear_LeastSquares_Bigm_7_100_150_1.0_0.8_1000.0_x_solve_time.csv};
		\label{instance:hard}
		\addplot[smooth,ultra thick,densely dashed,black] table [
			x		= bigmfactor,
			y expr	= {\thisrow{SbnbNscr}/\thisrow{SbnbPeel})},
			col sep	= comma,
		] {dat/hard_synt_opt_linear_LeastSquares_Bigm_7_100_150_1.0_0.8_1000.0_x_node_count.csv};

		\coordinate (top2) at (rel axis cs:0,1);


		\nextgroupplot[
			ymode 			= log,
			ymin			= 2e-1,
			ymax			= 2e5,
			extra y ticks	= {10, 100, 1000, 10000, 1e5},
			xlabel 			= $\stdnnz$,
		]
		\foreach \solver in {Cplex,L0bnb,Sbnb,SbnbNscr,SbnbPeel}{
			\addplot table [
				x		= sigma,
				y		= \solver,
				col sep	= comma,
			] {dat/easy_synt_opt_linear_LeastSquares_Bigm_5_100_150_x_0.1_1000.0_3.0_solve_time.csv};
		}

		\nextgroupplot[
			ymode 			= log,
			ymin			= 2e-1,
			ymax			= 2e5,
			extra y ticks	= {10, 100, 1000, 10000, 1e5},
			xlabel 			= $\stdnnz$,
		]
		\foreach \solver in {Cplex,L0bnb,Sbnb,SbnbNscr,SbnbPeel}{
			\addplot table [
				x		= sigma,
				y		= \solver,
				col sep	= comma,
			] {dat/medium_synt_opt_linear_LeastSquares_Bigm_7_100_150_x_0.1_1000.0_3.0_solve_time.csv};
		}

		\nextgroupplot[
			ymode 			= log,
			ymin			= 2e-1,
			ymax			= 2e5,
			extra y ticks	= {10, 100, 1000, 10000, 1e5},
			xlabel 			= $\stdnnz$,
		]
		\foreach \solver in {Cplex,L0bnb,Sbnb,SbnbNscr,SbnbPeel}{
			\addplot table [
				x		= sigma,
				y		= \solver,
				col sep	= comma,
			] {dat/hard_synt_opt_linear_LeastSquares_Bigm_7_100_150_x_0.8_1000.0_3.0_solve_time.csv};
		}

		\coordinate (bot) at (rel axis cs:1,0);

		\nextgroupplot[
			xlabel 	= $\stdnnz$,
			ymin	= 0.9,
			ymax	= 5.1,
		]

		\addplot[smooth,ultra thick,lightgray,mark=*,mark size=0.75pt] table [
			x		= sigma,
			y expr	= {\thisrow{SbnbNscr}/\thisrow{SbnbPeel})},
			col sep	= comma,
		] {dat/easy_synt_opt_linear_LeastSquares_Bigm_5_100_150_x_0.1_1000.0_3.0_solve_time.csv};
		\addplot[smooth,ultra thick,densely dashed,lightgray] table [
			x		= sigma,
			y expr	= {\thisrow{SbnbNscr}/\thisrow{SbnbPeel})},
			col sep	= comma,
		] {dat/easy_synt_opt_linear_LeastSquares_Bigm_5_100_150_x_0.1_1000.0_3.0_node_count.csv};
		\addplot[smooth,ultra thick,gray,mark=*,mark size=0.75pt] table [
			x		= sigma,
			y expr	= {\thisrow{SbnbNscr}/\thisrow{SbnbPeel})},
			col sep	= comma,
		] {dat/medium_synt_opt_linear_LeastSquares_Bigm_7_100_150_x_0.1_1000.0_3.0_solve_time.csv};
		\addplot[smooth,ultra thick,densely dashed,gray] table [
			x		= sigma,
			y expr	= {\thisrow{SbnbNscr}/\thisrow{SbnbPeel})},
			col sep	= comma,
		] {dat/medium_synt_opt_linear_LeastSquares_Bigm_7_100_150_x_0.1_1000.0_3.0_node_count.csv};
		\addplot[smooth,ultra thick,black,mark=*,mark size=0.75pt] table [
			x		= sigma,
			y expr	= {\thisrow{SbnbNscr}/\thisrow{SbnbPeel})},
			col sep	= comma,
		] {dat/hard_synt_opt_linear_LeastSquares_Bigm_7_100_150_x_0.8_1000.0_3.0_solve_time.csv};
		\addplot[smooth,ultra thick,densely dashed,black] table [
			x		= sigma,
			y expr	= {\thisrow{SbnbNscr}/\thisrow{SbnbPeel})},
			col sep	= comma,
		] {dat/hard_synt_opt_linear_LeastSquares_Bigm_7_100_150_x_0.8_1000.0_3.0_node_count.csv};

		\coordinate (bot2) at (rel axis cs:1,0);

	\end{groupplot}

	\path (top|-current bounding box.north)-- coordinate(legendpos) (bot|-current bounding box.north);
	\matrix[
		matrix of nodes,
		anchor=south,
		draw,
		inner sep=0.2em,
		every node/.style={anchor=base west, font=\small},
	] at([yshift=1ex]legendpos)
	{
		\ref{solver:Cplex} & \textsc{Cplex} & 
		\ref{solver:L0bnb} & \textsc{L0bnb} & 
		\ref{solver:Sbnb} & \textsc{Sbnb} & 
		\ref{solver:SbnbNscr} & \textsc{Sbnb-N} & 
		\ref{solver:SbnbPeel} & \textbf{\textsc{Sbnb-P}} \\
	};

	\path (top2|-current bounding box.north)-- coordinate(legendpos2) (bot2|-current bounding box.north);
	\matrix[
		matrix of nodes,
		anchor=south,
		draw,
		inner sep=0.2em,
		every node/.style={anchor=base west, font=\small},
	] at([yshift=-3.25ex]legendpos2)
	{
		\ref{instance:easy} & \textsc{Easy} & 
		\ref{instance:medium} & \textsc{Med.} & 
		\ref{instance:hard} & \textsc{Hard} \\
	};
\end{tikzpicture}
    \caption{
        Three first columns: Solving time as a function of $\bigMfactor$  (top, $\stdnnz=1$) and $\stdnnz$ (bottom, $\bigMfactor=3$). 
        Last column: gain in terms of solving time (solid) and number of nodes explored (dashed) 
        with respect to 
        \textsc{Sbnb-N} for the three different types of hardness.
    }
    \label{fig:additional}
\end{figure*}

In \Cref{sec:results}, we only vary the parameters that may directly influence the \textit{Big-M} constraint.
To give a broader overview of the performance allowed by our peeling methodology, we have selected three different types of instances of \eqref{prob:prob-base} with increasing hardness.
We still fix $\ddim$, $\pdim$ and $\mathrm{SNR}$ but we vary $\sparsitylevel$ and $\corrparam$ 
to control the instance inherent complexity.
On the one hand, larger $\sparsitylevel$ leads to problems with a more complex combinatorial nature.
On the other hand, increasing $\corrparam$ leads to more correlation between the columns of $\dic$, which brings the local minima of \eqref{prob:prob-base} closer to the global ones.
Both of these effects make the problem harder to solve.
Our three setups are constructed with the following combination of the parameters:
\begin{center}
    \begin{tabular}{cccccc}
        \toprule
        & $\sparsitylevel$ & $\ddim$ & $\pdim$ & $\corrparam$ & SNR \\
        \midrule
        Easy & $5$ & $100$ & $150$ & $0.1$ & 15 \\
        Medium & $7$ & $100$ & $150$ & $0.1$ & 15 \\
        Hard & $7$ & $100$ & $150$ & $0.8$ & 15 \\
        \bottomrule
    \end{tabular}
\end{center}
We note that the maximum solution time allowed for the solver is 30 hours, that is just above $10^5$ seconds.
For particularly hard instances, some solvers hit this time limit.
Since it only concerns a minority of the trials, we have decided not to remove them, but this may have slightly lower their mean solution time.

\Cref{fig:additional} shows the solution time of the different methods for the three different types of instances when varying $\bigMfactor$ and $\stdnnz$.
We roughly observe the same behaviors as in \Cref{fig:sensibility}. 
A notable remark is that the more difficult the instance, the larger the gains permitted by our peeling methodology.
Our empirical analysis is that on harder instances, the \gls{bnb} tree has to be explored more extensively.
Tightening the bounds at a given node will therefore allow a greater number of nodes to benefit from the strengthening of the relaxations.
The solving time will therefore be further reduced.
When varying $\bigMfactor$, the gains of the method implementing our peeling strategy (\textsc{Sbnb-P}) again its best concurrent (\textsc{Sbnb-N}) almost reach a factor $2$ on \textsc{Easy} instances. 
On \textsc{Medium} instances, it reaches a factor $4$ and on \textsc{Hard} instances, it reaches a factor $5$. 

}{}

\end{document}